\setlist[enumerate]{nosep}
\newlength\Origarrayrulewidth
\newtheorem{theorem}{Theorem}[section]
\newtheorem{lemma}{Lemma}[section]
\newtheorem{corollary}{Corollary}[section]
\newtheorem{proposition}{Proposition}[section]
\newtheorem{definition}{Definition}[section]
\newcommand{\vo}{\vec{o}\@ifnextchar{^}{\,}{}}
\title{ALLWAS: Active Learning on Language models in WASserstein space}
\author{Anson Bastos \\
  {\tt ansonbastos@gmail.com} \\
  IIT, Hyderabad \\
  India \\\And
  Manohar Kaul \\
  {\tt mkaul@iith.ac.in} \\
  IIT, Hyderabad \\
  India}
\begin{document}
\setlength{\abovedisplayskip}{3pt}
\setlength{\belowdisplayskip}{3pt}

\maketitle

\begin{abstract}
Active learning has emerged as a standard paradigm in areas with scarcity of labeled training data, such as in the medical domain. 
Language models have emerged as the prevalent choice of several natural language tasks due to the performance boost offered by these models. However, in several domains, such as medicine, the scarcity of labeled training data is a common issue. Also, these models may not work well in cases where class imbalance is prevalent. Active learning may prove helpful in these cases to boost the performance with a limited label budget. 
To this end, we propose a novel method using sampling techniques based on submodular optimization and optimal transport for active learning in language models, dubbed ALLWAS. 
We construct a sampling strategy based on submodular optimization of the designed objective in the gradient domain. Furthermore, to enable learning from few samples, we propose a novel strategy for sampling from the Wasserstein barycenters. Our empirical evaluations on standard benchmark datasets for text classification show that our methods perform significantly better ($>20\%$ relative increase in some cases) than existing approaches for active learning on language models.


\end{abstract}


\section{Introduction} \label{sec:introduction}
Active learning is a technique for improving model performance over a fixed annotation budget \cite{10.5555/1622737.1622744}. Generally, the data is obtained for labeling iteratively after alternating training phases until the desired performance is achieved. This contrasts with passive learning, where one assumes access to labels for the entire pool of data. There are three scenarios for active learning \cite{settles2009active}: (1) \emph{pool-based}, where a set of unlabeled data points are available (2) \emph{stream-based}, in which the data points are received in an online fashion, and (3) \emph{membership query synthesis}, where the data points are generated for labeling. In this work, our focus will be on the pool-based setting for active learning.

Active learning has benefited a wide gamut of applications such as text classification \cite{10.1162/153244302760185243, 10.1145/1135777.1135870}, named entity recognition \cite{Tomanek-etal, shen-etal-2004-multi}, and machine translation \cite{haffari-sarkar-2009-active}, to name a few. Transformer-based language models \cite{devlin-etal-2019-bert} have shown improved performance on NLP tasks. These models with a large number of parameters require comparable amounts of data to produce good results \cite{Margatina/BALM/abs-2104-08320} and thus pose a challenge in the active learning setting.
There has been a recent surge in the study of language models in the active learning setup~\cite{ein-dor-etal-2020-active,Margatina/BALM/abs-2104-08320}.
However, many of these approaches are based on uncertainty sampling, which may not work well for uncalibrated deep models \cite{Guo/Calibration/DBLP:journals/corr/GuoPSW17}. Other approaches look at the embedding space \cite{sener2018active} or the gradient space \cite{Huang/EGL/DBLP:journals/corr/HuangCRLSC16}. However, these methods generally assume a Euclidean metric between the data points in the respective spaces, which fails to judiciously capture the complex interactions. In this paper, we hypothesize that finding a core set of points using the Wasserstein metric would result in better performance than simply selecting a set of points that minimizes or maximizes a measure.

Such large language models require considerable amounts of representative data, which makes it infeasible for fine-tuning in the active learning scenario that work with a limited annotation budget. This drawback is effectively alleviated by \emph{data augmentation} in the image domain~\cite{Ratner/10.5555/3294996.3295083}. However, data augmentation is not a straightforward exercise for textual data. There have been numerous attempts to augment data by generating samples to label in the \emph{token space} \cite{Liu/9240734, quteineh-etal-2020-textual} and the \emph{feature space}~\cite{Kumar/DBLP:journals/corr/abs-1910-04176, feng2021survey}. Generating tokens could render the labels erroneous because of the nature of the hard assignment. To this end, we propose an over-sampling strategy based on \emph{Wasserstein barycenters}~\cite{pmlr-v32-cuturi14} in the embedding space. 
Our rationale here is that augmenting data by such a sampling technique benefits active learning because it operates well in both the \emph{low data regime} as well as the \emph{class imbalance} scenarios.

To the best of our knowledge we are the first to propose such an augmentation method for active learning with language models.
Our key contributions are:
\begin{enumerate}
    \item We propose a novel sampling strategy based on the Wasserstein distance in the gradient space. We prove its submodularity and propose a $1-\frac{1}{e}$ optimal greedy algorithm. 
    \item We design an over-sampling technique based on the Wasserstein barycenter of the embeddings of the data points for better performance in the cases with few labeled samples.
    \item We demonstrate the effectiveness of our method by running extensive experiments on real world scenarios of few labeled samples and class imbalance. We also conduct experiments on the multi-class settings which have not been considered in previous works. 
\end{enumerate}




\section{Related Work} \label{sec:related}
Prior works on active learning have focused on uncertainty based sampling such as entropy~\cite{Lewis/DBLP:journals/corr/LewisG94}, least model confidence \cite{Settles/NIPS2007_a1519de5} and diversity based methods~\cite{Settles/NIPS2007_a1519de5, Xu/10.1007/978-3-540-71496-5_24, Wei/pmlr-v37-wei15}. \cite{Settles/NIPS2007_a1519de5,Hsu/AAAI159636} have tried to use a combination of the diversity and uncertainty based approaches. Active learning has been effectively used in previous works for CNN based models \cite{sener2018active,pmlr-v48-gal16,Gissin/abs-1907-06347}. 
\emph{Coresets} have been used for importance sampling \cite{Cohen/10.5555/3039686.3039801}, $k$-means and medians clustering \cite{Peled/10.1145/1007352.1007400} and for Gaussian mixture models (GMMs)~\cite{Lucic/JMLR:v18:15-506}. Work in \cite{pmlr-v119-mirzasoleiman20a} used coresets in the gradient domain for subsampling data points for accelerated training. 
\cite{Wei/pmlr-v37-wei15} combines the uncertainty sampling methods with a submodular optimization method for subset selection. \cite{ramalingam2021balancing} uses a combination of submodular functions for balancing constraints of class labels and decision boundaries using matroids. A study of the theoretical performance of batch mode active learning with submodularity is given in \cite{Chen/pmlr-v28-chen13b}. Submodular functions have also been used in NLP for text summarization \cite{lin-bilmes-2011-class}, machine translation \cite{Kirchhoff_submodularityforSMT} and goal oriented chatbots \cite{Dimovski/submod_chatbot/10.5555/3304222.3304329}. In contrast to previous works we propose a novel submodular function for query sampling that operates in the gradient space. We argue that this would help select samples that are most representative of the gradients.

Data Augmentation techniques using Wasserstein barycenters and optimal transport have been adopted in the literature \cite{Zhu/10.1007/978-3-030-58607-2_37, bespalov2021data, Nadeem/10.1007/978-3-030-59722-1_35, Yan_Tan_Xu_Cao_Ng_Min_Wu_2019} for the image domain. In contrast, NLP researchers have primarily focused on generating data in the token space \cite{Liu/9240734, quteineh-etal-2020-textual} for data augmentation \cite{wang-yang-2015-thats,kobayashi-2018-contextual}, paraphrase generation \cite{kumar-etal-2019-submodular} etc. There exist methods that use \emph{mixups} in the feature space \cite{Kumar/DBLP:journals/corr/abs-1910-04176, feng2021survey} for data augmentation. However, to the best of our knowledge this is the first work to explore data augmentation using Wasserstein barycenters for active learning using language models. We argue that our method is advantageous in the low sample and imbalanced class settings.




\section{Problem Statement and Approach} \label{sec:problem_approach}
\subsection{Problem Formulation}\label{sec:problem}
Typical pool based active learning methods have the following components: a pool of unlabelled data $U_{pool}$, a model $M$ on which to train the data for the downstream task, and an annotation budget $b$, which is a limit on the amount of labeled data that can be obtained. The last component is an acquisition or query function $q(.)$ that would be used for querying over $U_{pool}$ to obtain the data to be labeled. This is an iterative process in which, at every iteration, the query function $q(.)$ acquires a query set of size $k(<b)$. Finally, the model $M$ is trained over the samples provided by the query function and is evaluated on a validation set $D_{val}$. The aim is to maximize the performance on $D_{val}$ with a minimum labeled sample set \cite{Siddhant/abs-1808-05697}. The process is repeated until the annotation budget $b$ is exceeded or the desired performance on the validation set is achieved. 
\subsection{Approach} \label{sec:approach}
\begin{figure}[htbp]
  \centering
\setkeys{Gin}{width=1.\linewidth}
  \includegraphics{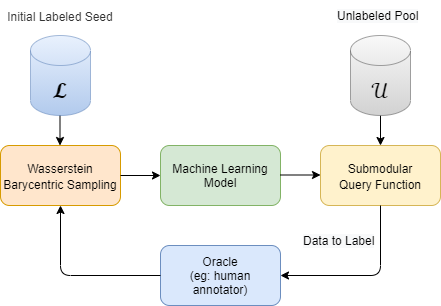}
 \caption{ALLWAS Process Flow. It uses Wasserstein Barycenters for data augmentation. The submodular query function is used for unlabeled data acquisition. These two steps aims to enhance performance of underlying ML model (BERT in our case).}
  \label{fig:AL_process}
\end{figure}

Figure \ref{fig:AL_process} outlines the flow of our proposed approach. First, the initial labeled seed of data is given to the barycentric sampling module for upsampling. Next, a model, in this case, a language model, is trained on this initial seed. The query function then uses the model to sample data points labeled by an oracle and a human annotator. The newly labeled data points are fed to the upsampling module, and the process repeats until the labeling budget is reached or the desired performance is achieved. The details of the individual components, along with some preliminaries, are explained in the following sections.

\subsubsection{Optimal Transport and Wasserstein Barycenters}\label{OT_wass}
Let $\Omega$ be any space, $D$ be a distance metric in $\Omega$, and $P(\Omega)$ be the set of probability measures in that space. Let $x,y \in \Omega$ be the dirac masses with probability measures $\mu$ and $\nu$ respectively. The Optimal transport \cite{monge1781memoire} problem is to minimize the cost in transporting $x$ to $y$ . The Wasserstein distance defines the optimal transport plan to move an amount of matter from one location to another.
\begin{definition}[]
Let $p \in [1,\infty)$ and D: $\Omega \times \Omega$ $\xrightarrow{} [0,\infty)$ be the cost of transporting the measure $\mu$ to $\nu$, then the $p^{th}$ Wasserstein distance \cite{villani2009a} between the measures is given by
\begin{equation} \label{wasserstein}
    W_p(\mu,\nu) = \inf_{\gamma \in \Pi(\mu,\nu)}\left(\int_{\Omega \times \Omega} D(x,y)^p \partial{\gamma}\right)^{\frac{1}{p}}
\end{equation}
where $\Pi$ is the set of all the possible transport plans with the marginals $\mu$ and $\nu$.
\end{definition}

\begin{definition}[Wasserstein Barycenter, \cite{Agueh/doi:10.1137/100805741}]\label{wasserstein_barycenter}
A Wasserstein barycenter of $n$ measures $\{v_1, v_2,..., v_n\}$ in $\mathbb{P} \subset P(\Omega)$ is a measure that minimizes the weighted sum of the $p^{th}$ Wasserstein distance over $\mathbb{P}$ i.e. it is a minimiser of $f$ defined as below
\begin{equation} \label{wasserstein_barycenter_eq}
    f(\mu) = \sum_{i=1}^{N} \lambda_{i} W_{p}^{p} (\mu, \nu_{i})
\end{equation}
\end{definition}
\noindent Here we consider a convex combination of $W_{p}^{p}$ i.e. $\lambda_i \leq 1$ and $\sum_{i} \lambda_i = 1$. If $D$ is the $L_2$ distance and $p=2$ that is when $P(\Omega,D)$ is the euclidean distance metric, minimizing $f$ results in the k means solution \cite{kaufmanl1987clustering}.

\noindent \textbf{Sampling using Wasserstein Barycenters:} The definition of the Wasserstein barycenter in \ref{wasserstein_barycenter} allows us to sample from a set of data points as illustrated in this section.
\begin{figure}[htbp]
  \centering
\setkeys{Gin}{width=1.\linewidth}
  \includegraphics{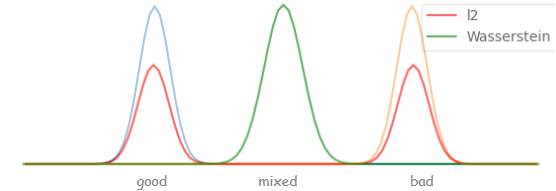}
 \caption{Example of a 1D Distribution (in orange and blue) showing the Wasserstein and $\ell_2$ barycenters}
  \label{fig:wass_vs_l2_bary}
\end{figure}
The intuition for using Wasserstein barycenters instead of euclidean barycenters is outlined in Figure \ref{fig:wass_vs_l2_bary}. The figure shows a distribution of 1-dimensional word vectors, with the words "good" and "bad" at the extremes and the word "mixed" in between them. We see from the distribution that the data contains the words "good" and "bad". Sampling using the Wasserstein barycenter with equal weights gives us the word "mixed." In contrast, the $\ell_2$ barycenter would sample either of the words "good" or "bad" with equal probability. This implies that in a sentiment classification task,  a pair of sentences "The movie was good" and "The movie was bad" would enable us to sample a neutral sentence "The movie was mixed" using the Wasserstein barycenter. We argue that in contrast to sampling from the $\ell_2$ barycenter (see subsection \ref{l2_vs_wass_ablation}) or no over-sampling (augmentation),  sampling technique incorporating Wasserstein barycenters would result in superior performance for the few sample and imbalanced case. 

Consider an machine learning (language) model such as Bert \cite{devlin-etal-2019-bert} with the output $d$ dimensional contextual embeddings as $e_1 , e_2, \cdots e_n$ from a layer $l$ for the input tokens $w_1, w_2, \cdots w_n$ respectively. Now let us consider $s$ sentences each with number of tokens given by $n_1, n_2, \cdots n_s$ respectively. The contextual embeddings of a sentence $s_i$ would be represented as $E_i = [e_1, e_2, \cdots, e_{n_i}]$, where $E_i \in R^{d \times n_i}$. The Wasserstein barycenter of these samples would then be given by,
\begin{equation} \label{wass_bary_sampling}
    E_{c} = \underset{E_{c} \in R^{d \times n_i}}{argmin} \sum_{i=1}^{N} \lambda_{i} W_{p}^{p} (E_{c}, E_i)
\end{equation}
Thus, we obtain the barycenter in the embedding space. We also modify the labels by taking a weighted average as follows:
\begin{equation} \label{labels_sampling}
    L_{c} = \sum_{i=1}^{N} \lambda_{i} L_i
\end{equation}
Where $L_i$ is the true class probabilities of the sentence $s_i$.
Varying the values of the $\lambda$s could enable picking multiple data points, enabling over-sampling from the pool of labeled data.

\subsubsection{Submodular Acquisition function}
Let $V$ be the set of all points in the space $\Omega$ under consideration. Let $A$, $B$ be two subsets of V such that $A \subseteq B$. Let F be a set function (acting on a set $S$) $\Omega^{\vert S \vert} \xrightarrow[]{} R$, then F is said to be submodular if, on adding an element $e \in V \setminus B$ to $A$ and $B$, it satisfies the below condition
\[F\{A \cup e\} - F\{A\} \geq F\{B \cup e\} - F\{B\}\]
Previous works have used gradient spaces for subset selection in active learning \cite{Huang/EGL/DBLP:journals/corr/HuangCRLSC16} and to speed up training \cite{pmlr-v119-mirzasoleiman20a}. Selecting a subset of points with gradients that are representative of the gradients of the entire set of points would intuitively result in steering the model parameters in the right direction of the optimum value. This motivates our approach of using the gradient space to perform the acquisition of the data points. One issue that remains is that unlike in \cite{pmlr-v119-mirzasoleiman20a} we do not have the true labels beforehand. \cite{Huang/EGL/DBLP:journals/corr/HuangCRLSC16} proposes to use the expected gradient length with the expectation over the predicted logits. However, the predicted probabilities do not always correlate with the model confidence \cite{Guo/Calibration/DBLP:journals/corr/GuoPSW17} and calibration of the model may be required. We differ in our approach where we use the Wasserstein distances between the points in the gradient space to find the most representative sample set. Specifically, we select the subset that minimizes the below function.
\begin{equation}\label{eq_ls}
    L\{S\} = \sum_{i \in V} \underset{j \in S}{min} (W_p^p(i,j))
\end{equation}
where $W_p^p(i,j)$ is the $p^{th}$ Wasserstein distance between the $i^{th}$ and the $j^{th}$ sample in the gradient space.
Minimizing L is equivalent to finding the k medoids \cite{kaufmanl1987clustering} and in general, finding an exact solution is an NP-Hard problem. However, optimizing a submodular function enables us to obtain a $1-\frac{1}{e}$ optimal \cite{nemhauser1978a} solution in a greedy manner.
We define a submodular function using $L$ as below:
\begin{equation}\label{submodular_fn_eq}
    F\{S\} = L\{s_0\} - L\{s_0 \cup S\}
\end{equation}
Here $s_0$ is an auxillary set element and $L\{s_0\}$ can be considered a constant. We prove the submodularity of equation \ref{submodular_fn_eq} below.


\begin{lemma}\label{L_dec}
The function $L\{S\}$ is monotone decreasing.
\end{lemma}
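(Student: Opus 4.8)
The plan is to reduce the claim to the elementary fact that the minimum of a collection of real numbers can only decrease when the collection is enlarged, and then to observe that summation over $i \in V$ preserves this monotonicity. Concretely, I would fix two subsets $A \subseteq B \subseteq V$ and consider an arbitrary index $i \in V$. Since every candidate $j \in A$ is also a candidate in $B$, the feasible set over which we minimize $W_p^p(i,j)$ is at least as large for $B$ as for $A$, so $\min_{j \in B} W_p^p(i,j) \le \min_{j \in A} W_p^p(i,j)$. This step uses nothing about the Wasserstein distance beyond the fact that $W_p^p(i,\cdot)$ assigns a well-defined real value to each point.

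Next I would sum this pointwise inequality over all $i \in V$. Because finite (or countable) summation is monotone, adding up term-by-term smaller-or-equal quantities yields a smaller-or-equal total, and hence $L\{B\} = \sum_{i \in V} \min_{j \in B} W_p^p(i,j) \le \sum_{i \in V} \min_{j \in A} W_p^p(i,j) = L\{A\}$. Since $A \subseteq B$ were arbitrary, this is exactly the statement that $L$ is monotone decreasing.

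The only points requiring a little care are bookkeeping rather than substance: one should fix a convention for $L$ on the empty set (taking $\min$ over $\emptyset$ to be $+\infty$, which is consistent with the role of the auxiliary element $s_0$ in equation~\ref{submodular_fn_eq}, or else simply restricting attention to nonempty $S$), and one should note that $W_p^p(i,j) \ge 0$ so that every summand, and hence $L\{S\}$ itself, is finite and nonnegative whenever $S \neq \emptyset$. I do not anticipate a genuine obstacle here; this lemma is the easy structural half of the argument, and the substantive work will lie in the companion claim that $F\{S\} = L\{s_0\} - L\{s_0 \cup S\}$ is submodular, where the diminishing-returns inequality must be verified term by term using the behavior of the pointwise minimum under insertion of a new element.
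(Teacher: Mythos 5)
Your proof is correct and rests on the same core fact as the paper's: enlarging the set over which the minimum is taken can only decrease each term $\min_{j\in S} W_p^p(i,j)$, and summing over $i\in V$ preserves the inequality. The paper phrases this as a proof by contradiction for a single added element, whereas you argue directly for arbitrary $A\subseteq B$; this is a presentational difference only, and your direct version is if anything slightly cleaner.
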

\begin{proof}
From the definition of $L$ we have,
\[L = \sum_{i \in V} \underset{j \in S}{min} (W_p^p(i,j))\]
where $W_p^p(i,j)$ is the $p^{th}$ Wasserstein distance in the gradient space.
On adding an element $e \in V \setminus S$ to $S$, we get the new set $S^{'} = S \cup e$. The metric for the new set then becomes $L^{'} = \sum_{i \in V} \underset{j \in S^{'}}{min} (W_p^p(i,j))$.
Now let's assume that $L^{'} > L$. This means that for some point $i \in V$, the newly added point $e$ was selected and the distance $W_p^p(i,e)$ is greater than the previous minimum $W_p^p(i,j)$, which is a contradiction.
Thus, we have that $L^{'} \leq L$. 
\end{proof}

\begin{corollary}\label{F_inc}
The function $F = L\{s_0\} - L\{S \cup s_0\}$ is monotone increasing.
\end{corollary}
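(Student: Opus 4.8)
The plan is to obtain this immediately from Lemma~\ref{L_dec}. Since $s_0$ is a fixed auxiliary element, the quantity $L\{s_0\}$ does not depend on the chosen subset $S$ and acts merely as a constant offset. Hence $F\{S\} = L\{s_0\} - L\{S \cup s_0\}$ is monotone increasing in $S$ precisely when $S \mapsto L\{S \cup s_0\}$ is monotone decreasing, which is exactly the content we will extract from the previous lemma applied to the augmented set.

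Concretely, I would fix $A \subseteq B \subseteq V$ and an element $e \in V \setminus B$, and show $F\{A \cup e\} \geq F\{A\}$, i.e. $L\{(A \cup e) \cup s_0\} \leq L\{A \cup s_0\}$. Writing $(A \cup e) \cup s_0 = (A \cup s_0) \cup e$, note that $e \notin A \cup s_0$ (if $s_0$ happens to lie in $V$ and $e = s_0$ the two sets coincide and the inequality is trivial). Now invoke Lemma~\ref{L_dec} with the set $A \cup s_0$ playing the role of $S$: adjoining the extra point $e$ can only decrease $L$, so $L\{(A \cup s_0) \cup e\} \leq L\{A \cup s_0\}$. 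Negating and adding the constant $L\{s_0\}$ to both sides yields $F\{A \cup e\} \geq F\{A\}$, proving $F$ is monotone increasing. The same one-line argument also gives $F\{\emptyset\} = 0$ as a sanity check, since $L\{s_0 \cup \emptyset\} = L\{s_0\}$.

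The only thing to be careful about — and it is pure bookkeeping rather than a genuine obstacle — is to apply the monotone-decreasing property of $L$ to the \emph{augmented} set $S \cup s_0$ instead of $S$ itself, and to treat $L\{s_0\}$ consistently as a constant. No new estimates or properties of the Wasserstein distance are required beyond what Lemma~\ref{L_dec} already establishes.
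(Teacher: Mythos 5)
Your argument is correct and follows essentially the same route as the paper: treat $L\{s_0\}$ as a constant and invoke the monotone decrease of $L$ from Lemma~\ref{L_dec} on the augmented set $S \cup s_0$. Your version is slightly more careful in making explicit that the lemma must be applied to $S \cup s_0$ rather than to $S$ itself, but this is a bookkeeping refinement of the same one-line proof.
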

\begin{proof}
If we fix $L\{s_0\}$ to a constant and since $L\{S\}$ is monotone decreasing from lemma \ref{L_dec} , we have $F$ is monotone increasing.
\end{proof}

\begin{proposition}\label{prop_rate}
The rate of increase of $F$ at $A \in V$ is greater than or equal to that at $B (\in V) \supseteq A$.
\end{proposition}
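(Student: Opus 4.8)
The plan is to read ``the rate of increase of $F$ at a set $S$'' as the discrete marginal gain $\Delta_e F\{S\} := F\{S \cup e\} - F\{S\}$ for a fixed $e \in V \setminus B$, so that the claim is exactly the diminishing-returns inequality $\Delta_e F\{A\} \ge \Delta_e F\{B\}$ whenever $A \subseteq B$ and $e \notin B$ (i.e.\ the submodularity of $F$ in the form stated earlier). First I would substitute $F\{S\} = L\{s_0\} - L\{s_0 \cup S\}$ from \eqref{submodular_fn_eq}: the constant $L\{s_0\}$ cancels in the difference, giving $\Delta_e F\{S\} = L\{s_0 \cup S\} - L\{s_0 \cup S \cup e\}$. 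Setting $A' = s_0 \cup A$ and $B' = s_0 \cup B$ (still with $A' \subseteq B'$ and $e \notin B'$), it then suffices to prove $L\{A'\} - L\{A' \cup e\} \ge L\{B'\} - L\{B' \cup e\}$, that is, inserting $e$ decreases $L$ by at least as much when starting from the smaller set.

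Next I would compute this marginal decrease point by point. For a set $S$ and $i \in V$, write $d_S(i) = \min_{j \in S} W_p^p(i,j)$, so $L\{S\} = \sum_{i \in V} d_S(i)$. Since $d_{S \cup e}(i) = \min\bigl(d_S(i),\, W_p^p(i,e)\bigr)$, the per-point gain is
\[
d_S(i) - d_{S \cup e}(i) = \max\bigl(0,\; d_S(i) - W_p^p(i,e)\bigr),
\]
and summing over $i$ yields $L\{S\} - L\{S \cup e\} = \sum_{i \in V} \max\bigl(0,\, d_S(i) - W_p^p(i,e)\bigr)$.

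The comparison is then immediate. Because $A' \subseteq B'$, a minimum taken over the larger index set can only be smaller, so $d_{A'}(i) \ge d_{B'}(i)$ for every $i$ (this is the pointwise version of the monotonicity established in Lemma~\ref{L_dec}). As $x \mapsto \max(0, x - W_p^p(i,e))$ is nondecreasing in $x$, we get $\max(0, d_{A'}(i) - W_p^p(i,e)) \ge \max(0, d_{B'}(i) - W_p^p(i,e))$ for each $i$; summing over $i \in V$ gives $L\{A'\} - L\{A' \cup e\} \ge L\{B'\} - L\{B' \cup e\}$, which is the desired inequality, and hence $\Delta_e F\{A\} \ge \Delta_e F\{B\}$.

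I do not anticipate a genuine obstacle; the points to be careful about are purely bookkeeping: the auxiliary element $s_0$ must sit on both sides so that the cancellation of $L\{s_0\}$ is clean and $d_{s_0 \cup S}(i)$ is always defined, and one should state explicitly that ``rate of increase'' here means the set-function marginal gain rather than a continuous derivative. Together with Corollary~\ref{F_inc} (monotonicity of $F$), this proposition shows $F$ is monotone submodular, which is precisely what licenses the $1-\frac{1}{e}$ greedy guarantee of \cite{nemhauser1978a} for the acquisition function in \eqref{eq_ls}.
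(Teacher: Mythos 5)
Your proof is correct, but it takes a genuinely different --- and substantially more rigorous --- route than the paper. The paper's justification of Proposition~\ref{prop_rate} is a single informal sentence: it appeals to monotonicity (Corollary~\ref{F_inc}) and then asserts that adding $e$ to the larger set $B$ ``will only cause the same or lesser increase\ldots by definition of $F$,'' which essentially restates the claim rather than proving it; the real work is deferred to the subsequent theorem, where a continuous interpolation and a ``gradient theorem for path integral'' are invoked on a set function. You instead read ``rate of increase'' as the discrete marginal gain $\Delta_e F\{S\}$, reduce it via the cancellation of $L\{s_0\}$ to a statement about $L$, and then decompose the marginal decrease pointwise as $d_S(i) - d_{S\cup e}(i) = \max\bigl(0,\, d_S(i) - W_p^p(i,e)\bigr)$, concluding from $d_{A'}(i) \ge d_{B'}(i)$ and the monotonicity of $x \mapsto \max(0, x-c)$. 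This is the standard facility-location/$k$-medoid submodularity argument, and it buys you more than the paper's version does: since the marginal-gain inequality $\Delta_e F\{A\} \ge \Delta_e F\{B\}$ \emph{is} the definition of submodularity given earlier in the paper, your proof of the proposition already establishes the Theorem directly, making the paper's continuous-interpolation argument (whose meaning for a discrete set function is questionable) unnecessary. The one caveat worth stating explicitly is that your reading of ``rate of increase'' as the discrete marginal gain differs from the derivative-like reading the paper seems to intend for its later path-integral step, but yours is the only interpretation under which the proposition is a well-posed statement about set functions, and under it your argument is complete.
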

To understand proposition \ref{prop_rate} we note that adding $e \in V \setminus B$ to $A$ causes an increase in $F$ or maintains the value as it is monotone increasing from corollary \ref{F_inc} and since $A \subseteq B$ adding $e$ to $B$ will only cause the same or lesser increase in $F\{B\}$ by definition of $F$.

\begin{theorem}
$F$ is a submodular function.
\end{theorem}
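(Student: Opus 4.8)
The plan is to reduce the required submodularity inequality to a pointwise statement about the facility-location-type functional $L$ and then verify that statement term by term. Fix $A \subseteq B \subseteq V$ and an element $e \in V \setminus B$. Since $L\{s_0\}$ is a constant, expanding $F$ and cancelling shows that the defining inequality $F\{A \cup e\} - F\{A\} \ge F\{B \cup e\} - F\{B\}$ is equivalent, writing $A' = s_0 \cup A$ and $B' = s_0 \cup B$ (so that $A' \subseteq B'$ and $e \notin B'$), to
\begin{equation*}
L\{A'\} - L\{A' \cup e\} \;\ge\; L\{B'\} - L\{B' \cup e\}.
\end{equation*}
In words, the marginal decrease of $L$ caused by inserting $e$ is at least as large for the smaller set; this is exactly the content of Proposition~\ref{prop_rate}, so it suffices to prove the displayed inequality.

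Next I would split both sides over the outer sum $\sum_{i \in V}$. For a fixed $i$ and any set $S$, write $d(i,S) = \min_{j \in S} W_p^p(i,j)$; then $d(i, S \cup e) = \min\!\big(d(i,S),\, W_p^p(i,e)\big)$, so the per-point marginal decrease is
\begin{equation*}
d(i,S) - d(i,S\cup e) \;=\; \max\!\big(0,\; d(i,S) - W_p^p(i,e)\big).
\end{equation*}
Hence it is enough to establish, for every $i \in V$, the inequality $\max\!\big(0,\, d(i,A') - W_p^p(i,e)\big) \ge \max\!\big(0,\, d(i,B') - W_p^p(i,e)\big)$, since summing over $i$ recovers the previous display.

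This last inequality follows from two elementary monotonicity facts. First, because $A' \subseteq B'$, the minimum defining $d(i,\cdot)$ is taken over a larger index set on the right, so $d(i,A') \ge d(i,B')$ — the same observation used in Lemma~\ref{L_dec}. Subtracting the common quantity $W_p^p(i,e)$ preserves this inequality, and applying the nondecreasing map $t \mapsto \max(0,t)$ preserves it again, giving the required bound. Summing over $i \in V$ and re-introducing the constant $L\{s_0\}$ then yields the submodularity inequality for $F$, and combined with Corollary~\ref{F_inc} this also records that $F$ is monotone submodular, which is what the subsequent greedy $1-\frac1e$ guarantee needs.

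The argument is essentially routine once the reduction is in place; the only point that deserves care is that reduction itself — verifying the auxiliary element $s_0$ behaves correctly so that $A' \subseteq B'$ and $e \notin B'$ (the latter because $e \notin B$ and $s_0 \notin V$), and handling the cases in the $\min$/$\max$ rewriting cleanly. I would also remark that nothing in the proof uses the triangle inequality or symmetry of $W_p^p$: only that $W_p^p(i,j) \in [0,\infty)$ is well defined, so the statement holds verbatim for any nonnegative dissimilarity in the gradient space.
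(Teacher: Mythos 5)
Your proof is correct, and it takes a genuinely different route from the paper. The paper argues via a continuous relaxation: it interpolates between sets, invokes the gradient theorem for path integrals together with the chain rule, and then appeals to Proposition~\ref{prop_rate} to compare the ``derivatives'' of $F$ along the two paths. You instead give the classical facility-location argument: reduce the submodularity inequality for $F$ to the diminishing-marginal-decrease inequality for $L$, decompose over the outer sum $\sum_{i\in V}$, rewrite each per-point marginal as $\max\bigl(0,\, d(i,S)-W_p^p(i,e)\bigr)$ via the identity $d(i,S\cup e)=\min\bigl(d(i,S),W_p^p(i,e)\bigr)$, and finish with two monotonicity facts ($d(i,\cdot)$ is antitone in the set, and $t\mapsto\max(0,t)$ is nondecreasing). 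Your route is the more self-contained and rigorous of the two: the paper's path-integral argument requires treating a set function as differentiable over an unstated continuous extension, and the comparison of rates it imports from Proposition~\ref{prop_rate} is itself justified there only informally (essentially by restating the diminishing-returns property), whereas every step in your derivation is an elementary pointwise inequality. Your closing observations --- that $s_0$ guarantees the minima are over nonempty sets, that $e\notin B'$, and that neither symmetry nor the triangle inequality of $W_p^p$ is used --- are accurate and worth recording, since they show the theorem holds for any nonnegative dissimilarity in the gradient space, not just a Wasserstein metric.
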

\begin{proof}
Assume set of points in the gradient space, $B \in V$ and a set $A \subseteq B$. We assume a continuous space of the elements such that adding a fraction of it would cause a fractional change in the output. Note that an interpolation does not change the function definition for the discrete case where the direct mass is concentrated. Consider adding an element(s) $e \in V \setminus B$ to the sets $A$ and $B$. 
By the gradient theorem for path integral we get,

\small 
\begin{align*}
F(A \cup e) - F(A) &= \int_{0}^{1} \frac{\partial F(A + \alpha(A \cup e - A))}{\partial \alpha} d \alpha \\
  &= \int_{0}^{1} \frac{\partial F(A + \alpha(A \cup e - A))}{\partial x} \frac{\partial x}{\partial \alpha} d \alpha \\
  & \ \ \ \ \ \ \text{(using chain rule)} \\
  &= (A \cup e - A) \int_{0}^{1} \frac{\partial F(A + \alpha(A \cup e - A))}{\partial x}  d \alpha \\
  & \text{(as } \frac{\partial x}{\partial \alpha} = A \cup e - A = \text{constant } K \text{)} \\
  &= K \int_{0}^{1} \frac{\partial F(A + \alpha(A \cup e - A))}{\partial x}  d \alpha
\end{align*}
\normalsize
Similarly,
\small
\[F(B \cup e) - F(B) = K \int_{0}^{1} \frac{\partial F(B + \alpha(B \cup e - B))}{\partial x} d \alpha\]
\normalsize

From proposition \ref{prop_rate} we have $\frac{\partial F(A)}{\partial x}_{|x=A} \geq \frac{\partial F(B)}{\partial x}_{|x=B}$ for addition of the same element $e$ and this will be true in the entire interval $\alpha \in [0,1]$. Thus we get,
\[F(A \cup e) - F(A) \geq F(B \cup e) - F(B)\]
Thus we can conclude that $F$ is submodular.
\end{proof}

Since we have a submodular function in $F$, we could use a greedy algorithm to find a set that is $(1-\frac{1}{e})$ of the optimal set $S$ that maximizes $F$ (minimizes $L$). It further runs in polynomial time. The greedy algorithm begins with an empty set $S=\phi$ and at each iteration keeps adding an element $e \in V \setminus S$ that maximizes $F(e \vert S_{i-1}) = F(e \cup S_{i-1}) - F(S_{i-1})$ i.e. $S_i = S_{i-1} \cup \ \underset{e \in V}{argmax} F(e \vert S_{i-1})$. The iterations continue till a specified labeling budget is attained. In practice, computing the gradients with respect to the entire set of weights could be computationally expensive in language models that could have millions of parameters. Fortunately for deep networks most of the variation in gradients with respect to the loss is captured by the last layer \cite{katharopoulos2019samples}. Also, \citet{pmlr-v119-mirzasoleiman20a} efficiently upper bounds the norm of the difference between the gradients by the norm of the gradients of the loss with respect to the inputs to the last layer. Thus for computational efficiency we restrict to finding the gradients with respect to the weights of the last layer. The outline is sketched in Algorithm \ref{alg:algo_submodular_sampling}.

\begin{algorithm}
\SetAlgoNoLine
 \textbf{Input:} Unlabeled pool $\mathcal{U}$, Total Budget $B$, Samples to label per iteration $k$, Model $M$,  Initial labeled set $\mathcal{L}$; \leavevmode\newline 
 \While{$\vert \mathcal{L} \vert \leq \vert B \vert$}{
    \begin{enumerate}[leftmargin=*,label=$\bullet$]
        \item Train model M on $\mathcal{L}$
        \item $V \xleftarrow[]{} \phi$
        \leavevmode\newline
        \For{$x \in \mathcal{U}$}{
            $e \xleftarrow[]{} \frac{\partial \mathcal{M}(x)}{\partial x}$ \\
            $V = V \cup e$
        }
        $S_0 \xleftarrow[]{} \mathcal{L}$ 
        \item \For{i = 1, 2, ..., k}{
            $e = \underset{e \in V}{argmax}\ F(e \vert S_{i-1})$
            $S_i = S_{i-1} \cup e$
        }
        \item $L \xleftarrow[]{} S$
    \end{enumerate}
 }
 \caption{Greedy Algorithm to sample from the pool of unlabeled data points for Active learning}
  \label{alg:algo_submodular_sampling}
\end{algorithm}

\section{Experimental Setup}\label{sec:experiment}
\subsection{Datasets}
We use 7 standard text classification datasets and their 10 variants as used by \cite{ein-dor-etal-2020-active}. 
Specifically, the datasets used are Wiki Attack \cite{wikiattack/Wulczyn/10.1145/3038912.3052591}, ISEAR \cite{isear/shao/doi:10.1177/0022022114557479}, TREC \cite{trec/li-roth-2002-learning}, CoLA \cite{cola/warstadt-etal-2019-neural} , AG's News \cite{agnews/10.5555/2969239.2969312}, Subjectivity \cite{subjectivity/10.3115/1218955.1218990}, and Polarity \cite{polarity/10.3115/1219840.1219855}.
The experimental setup considers three settings: (1) \textbf{Balanced}, in which the prior probability of a class occurrence is $\geq$ 15\%. Here the initial seed for labeling is obtained by random sampling. (2) \textbf{Imbalanced} and (3) \textbf{Imbalanced practical} in which the prior probability of a  class occurrence is $\leq$ 15\%. The initial seed for labeling is obtained by assuming a high precision algorithm or a query.
For more details on the datasets and experimental setups, we refer the readers to \cite{ein-dor-etal-2020-active} and the Appendix.


\subsection{Comparative Methods}
The acquisition methods are used to query and obtain samples from the unlabeled pool for labeling. In the implementation 25 samples are queried per iteration. We use the active learning acquisition strategies as in \cite{ein-dor-etal-2020-active} namely \textbf{Random}, \textbf{Least Confidence} (\textbf{LC}, \cite{Lewis/DBLP:journals/corr/LewisG94}), \textbf{Monte Carlo Dropout} (\textbf{Dropout}, \cite{pmlr-v48-gal16}), \textbf{Perceptron Ensemble} (\textbf{PE}, \cite{ein-dor-etal-2020-active}), \textbf{Expected Gradient Length} (\textbf{EGL}, \cite{Huang/EGL/DBLP:journals/corr/HuangCRLSC16}) , \textbf{Core-Set} (\cite{sener2018active}), \textbf{Discriminative Active Learning} (\textbf{DAL}, \cite{Gissin/abs-1907-06347}). For details refer to the Appendix.
\subsection{Implementation Details}
The $\text{BERT}_{\text{BASE}}$ model (110 M parameters) is used with a batch size of 50 and a maximum token length of 100 tokens. In each active learning iteration, the model is trained for five epochs from scratch. A learning rate of $5 \times 10^{-5}$ has been used. The other parameters are the same as in the PyTorch implementation of BERT. We run each active learning method for five runs starting from the same initial seed (of 25 samples) for every model for a given run and average the result as in \cite{ein-dor-etal-2020-active}.


\section{Results and Discussion} \label{sec:evaluation}
We aim to answer the below Research Questions:
\begin{enumerate}
    \item \textbf{RQ1}: Is ALLWAS beneficial in the low resource and imbalanced setting?
    \item \textbf{RQ2}: Does the proposed Wasserstein barycentric over-sampling help in the few sample settings compared to the control of no over-sampling?
    \item \textbf{RQ3}: Does the proposed gradient-domain submodular query function perform better than existing approaches in the same space?
    \item \textbf{RQ4}: Is barycentric over-sampling in the wasserstein space significantly better than that in the $\ell_2$ space? 
\end{enumerate}
\subsection{Active Learning Results on the Binary class settings}
The results, for the imbalanced practical binary setting, are shown in the graphs in figures \ref{fig:imb_prac}. The results for the other settings can be found in the Appendix.
For brevity we show the results on the same set of the active learning methods as in \cite{ein-dor-etal-2020-active}. From the Figures in \ref{fig:imb_prac}, we observe that in most of the datasets our method outperforms all the other methods in all settings. 
For the balanced setting, we find that our method performs exceptionally well in the start with lesser data. Then as the training data increases with iterations, the performance of the other methods catches up. Thus we could say that our method converges faster in scenarios where the data is balanced. 
In the two imbalanced settings, we observe an apparent gain in performance. Thus, we conclude that combining submodular query function and barycentric sampling benefits performance in class imbalance cases of active learning (answering \textbf{RQ1}). 




\begin{figure*}[htbp]
  \centering
\setkeys{Gin}{width=0.24\linewidth,height=0.24\linewidth}
  \includegraphics{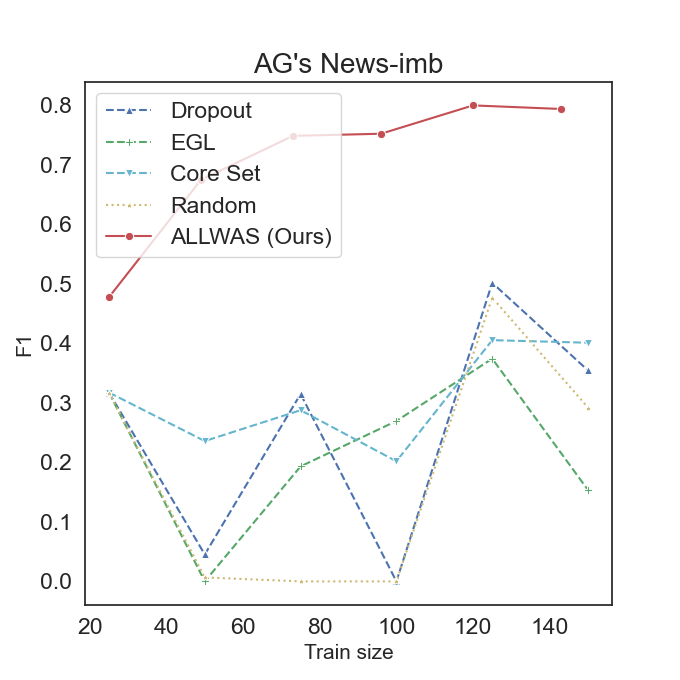}\,%
  \includegraphics{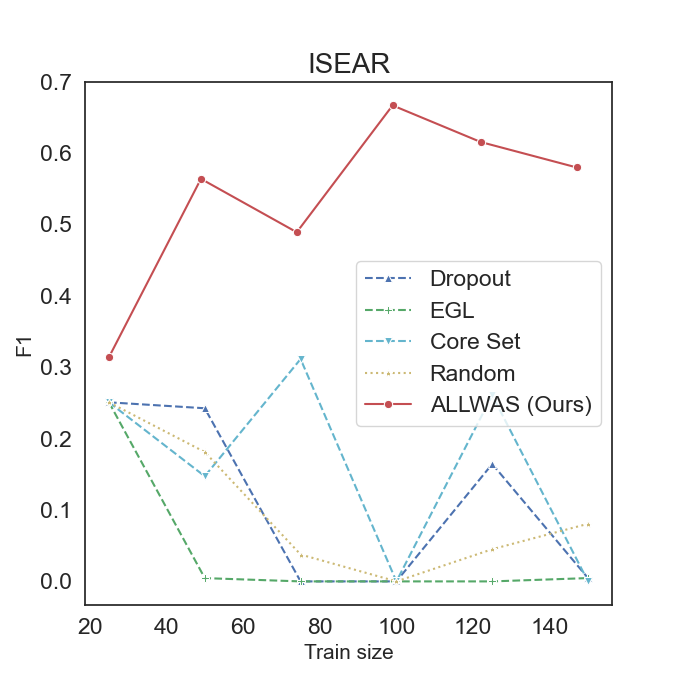}\,%
  \includegraphics{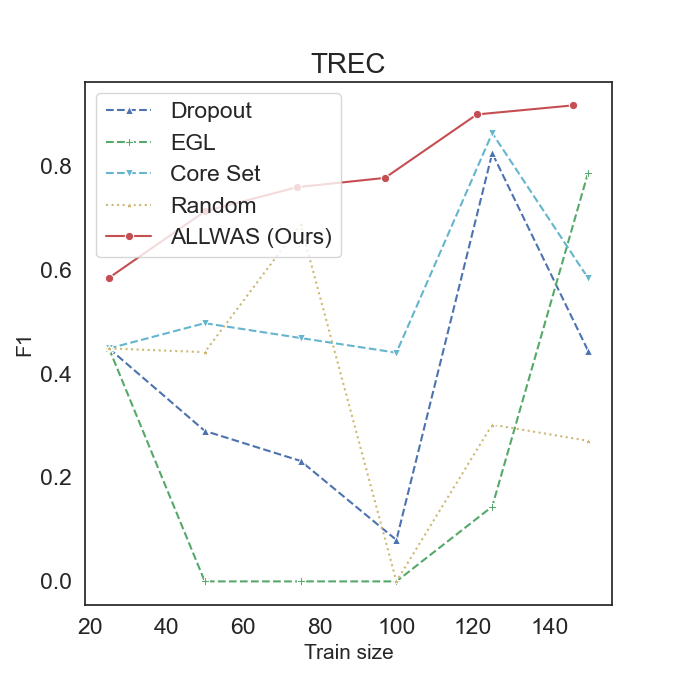}\,%
  \includegraphics{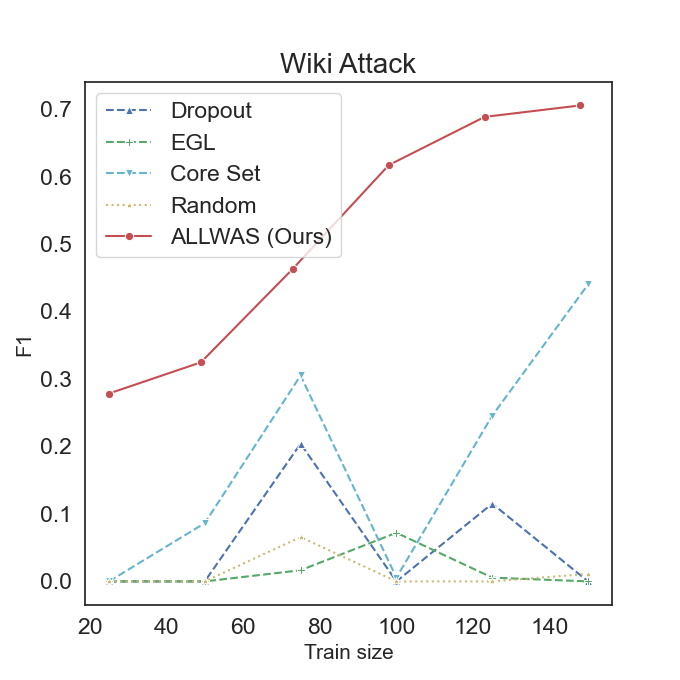}
 \caption{Results on the Imbalanced Practical setting. Our Model clearly outperforms the baselines.}
  \label{fig:imb_prac}
\end{figure*}

\subsection{Few Sample Results}
In the few sample settings, we test the barycentric sampling on a few data points sampled incrementally. The augmentation factor is kept at 20 as a default. The results are plotted in Figure \ref{fig:fsl}. We observe a stark improvement in the results, in some cases the relative increase being as high as 24\%. This shows that in such cases of data scarcity, the task, in this case, classification, could benefit by sampling from the Wasserstein barycenter of the original samples as an augmentation technique independent of the sampling technique used in the query function (answering \textbf{RQ2}). 
\begin{figure*}[htbp]
  \centering
\setkeys{Gin}{width=0.24\linewidth,height=0.24\linewidth}
  \includegraphics{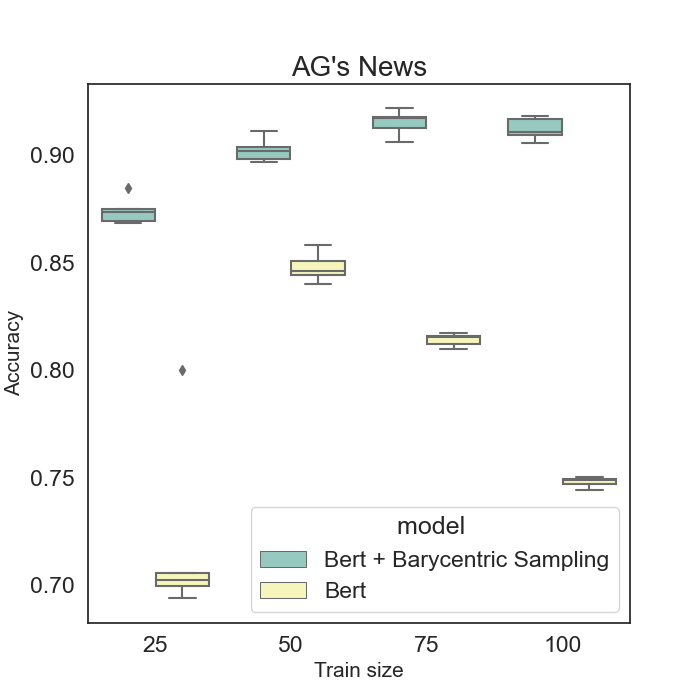}\,%
  \includegraphics{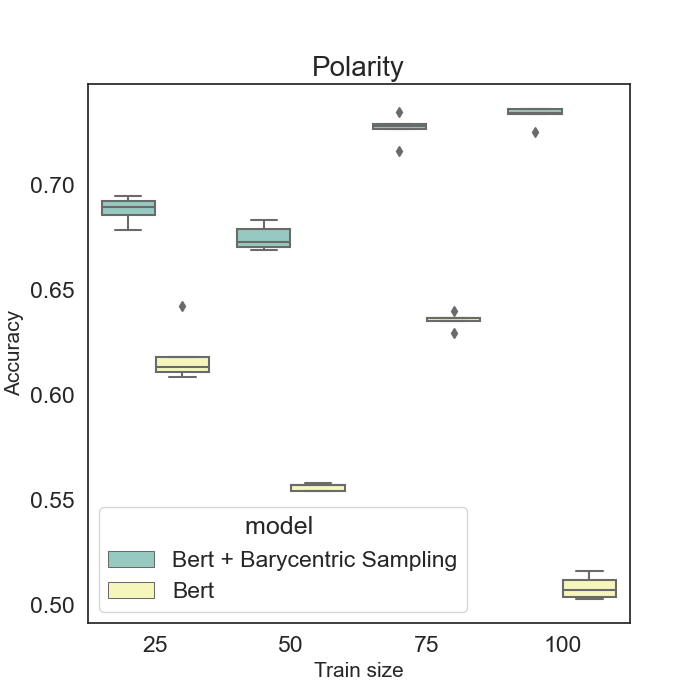}\,%
  \includegraphics{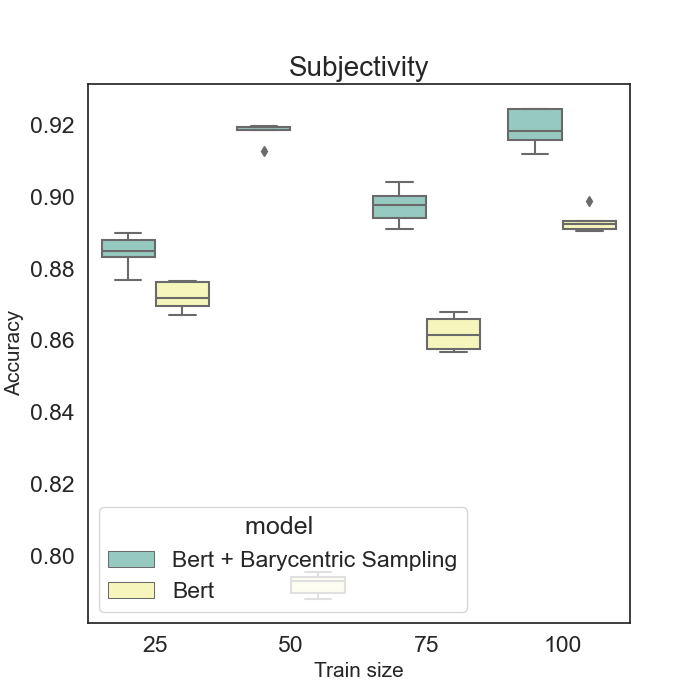}\,%
  \includegraphics{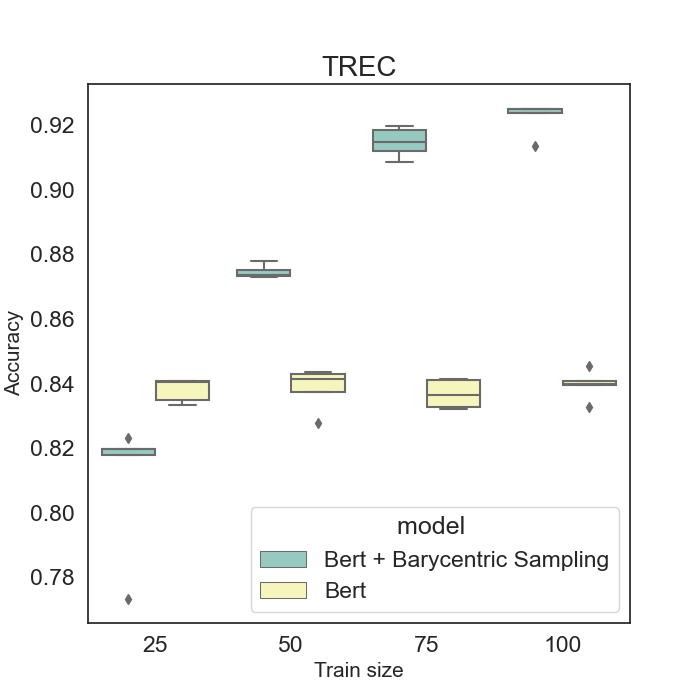}
 \caption{Few sample setting. Our setting (BERT+barycentric sampling) illustrates superior performance. }
  \label{fig:fsl}
\end{figure*}

\begin{figure*}[htbp]
  \centering
\setkeys{Gin}{width=0.24\linewidth,height=0.24\linewidth}
  \includegraphics{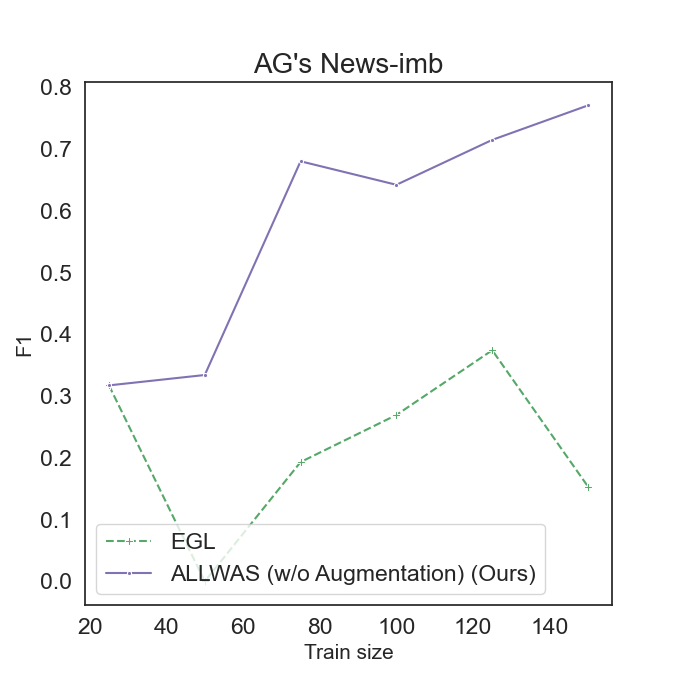}\,%
  \includegraphics{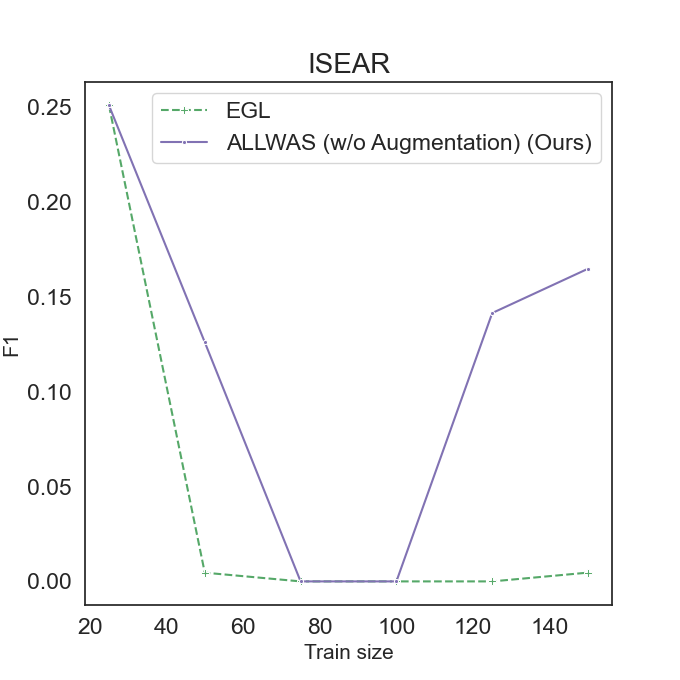}\,%
  \includegraphics{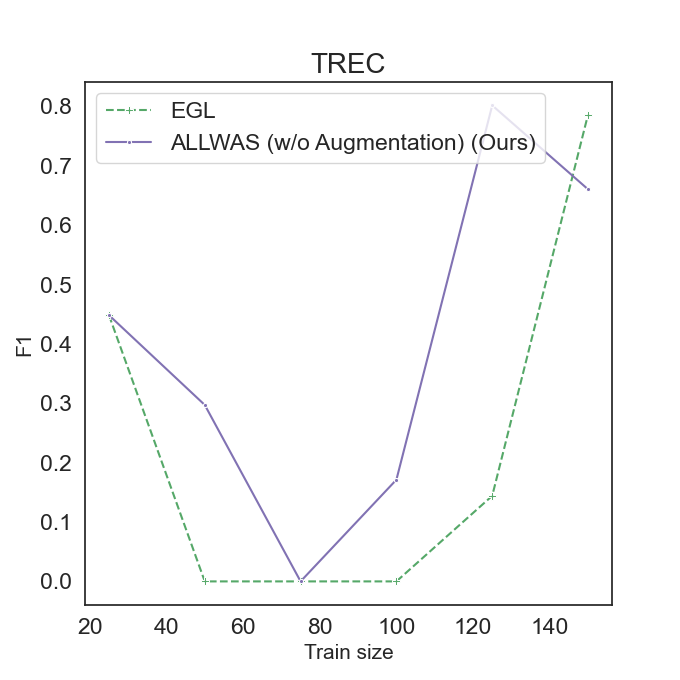}\,%
  \includegraphics{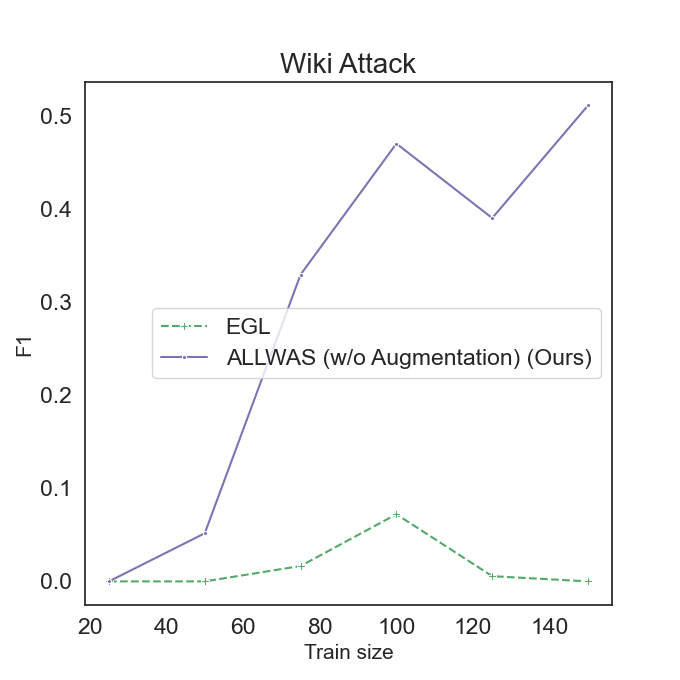}
 \caption{Comparison between selection based on maximum and coreset in the gradient domain}
  \label{fig:ablation_grads}
\end{figure*}

\begin{figure*}[htbp]
  \centering
\setkeys{Gin}{width=0.24\linewidth,height=0.192\linewidth}
  \includegraphics{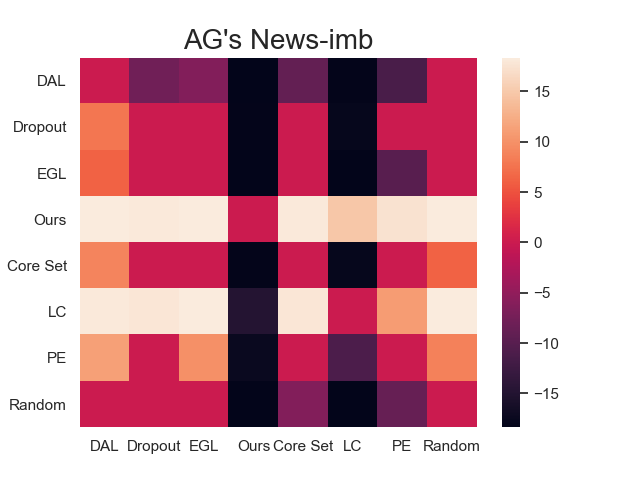}\,%
  \includegraphics{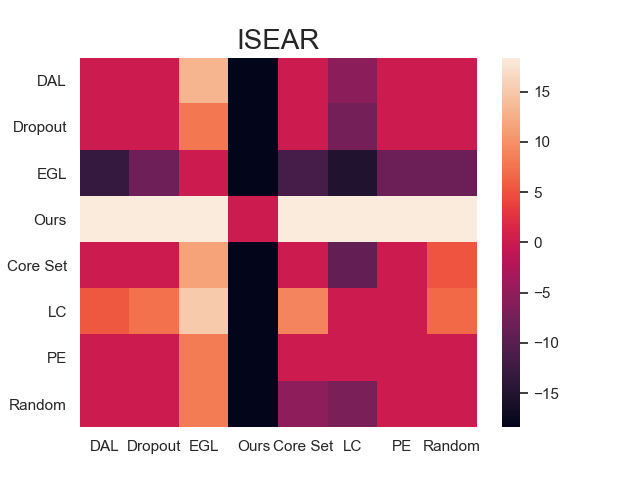}\,%
  \includegraphics{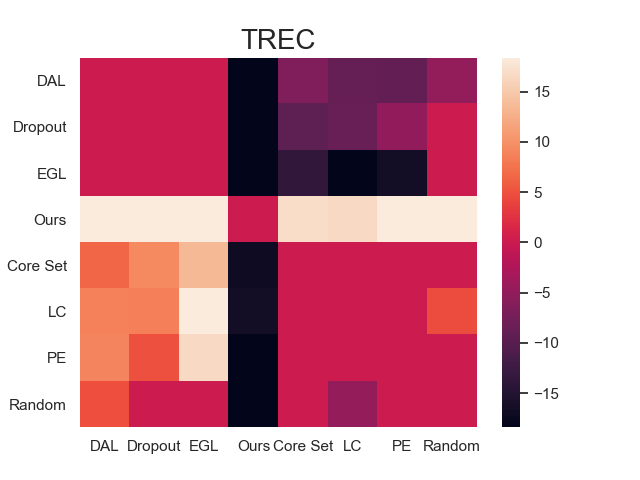}\,%
  \includegraphics{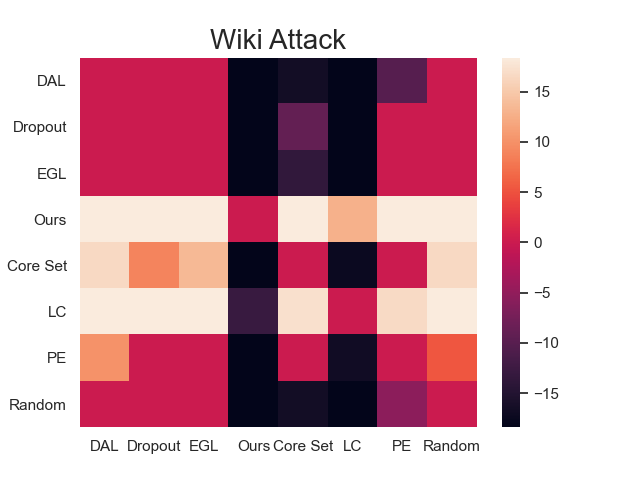}
 \caption{Statistical significance. Our model is statistically significant compared to baseline, illustrating the robustness of our proposed approach.}
  \label{fig:stat_sig}
\end{figure*}

\begin{figure*}[htbp]
  \centering
\setkeys{Gin}{width=0.24\linewidth,height=0.24\linewidth}
  \includegraphics{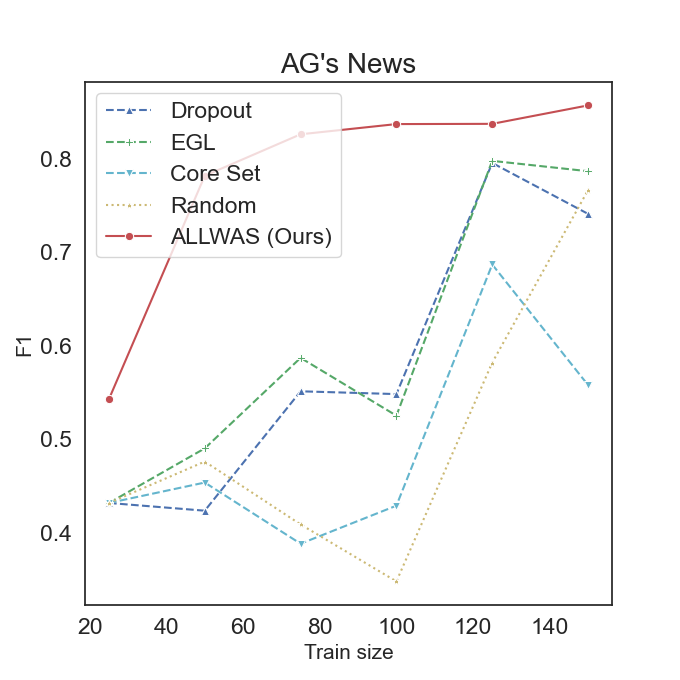}\,%
  \includegraphics{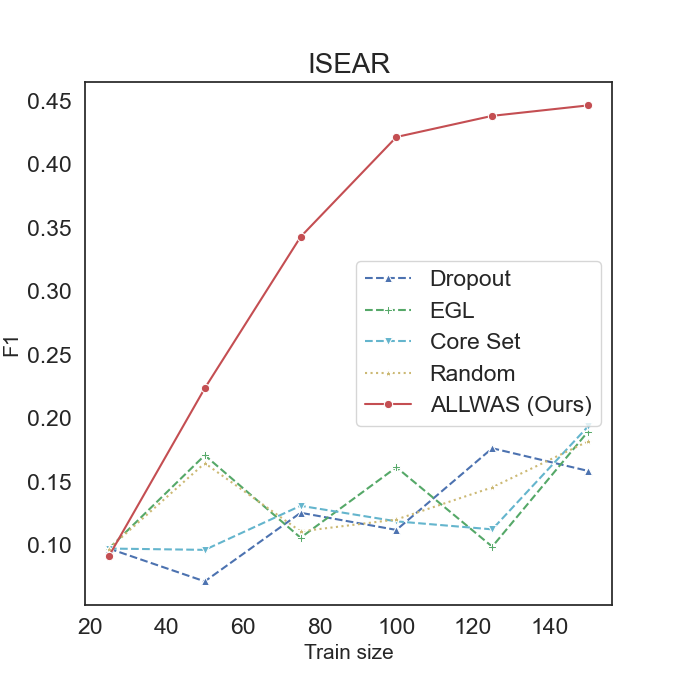}\,%
  \includegraphics{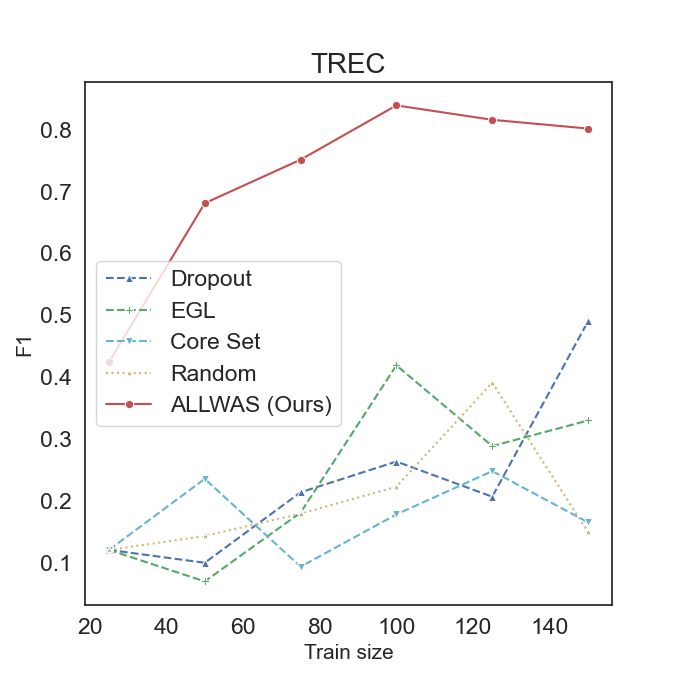}\,%
  \includegraphics{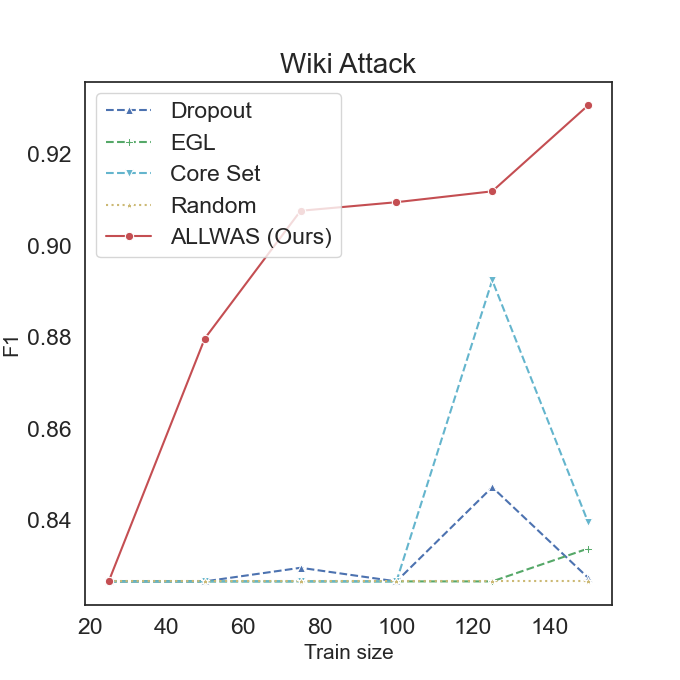}
 \caption{Results on the Multiclass setting. Our model performs significantly better than baselines.}
  \label{fig:multilabel}
\end{figure*}

  

\subsection{Coreset vs Maximum Gradient}
In this section, we study the performance of our query function (ALLWAS w/o Augmentation) that selects a coreset of the gradients against the expected gradient length method that picks samples with the largest expected gradient magnitudes (EGL, \cite{Huang/EGL/DBLP:journals/corr/HuangCRLSC16}). The plots in figure \ref{fig:ablation_grads} show that our query sampling technique outperforms the EGL method. Thus we conclude that selecting core-sets is the better approach against picking samples with extreme values in the gradient domain in assertion of \textbf{RQ3}.



\subsection{Sampling from Wasserstein vs $\ell_2$ barycenter} \label{l2_vs_wass_ablation}
In order to confirm the claim made in subsection \ref{OT_wass}, we perform a comparison between $\ell_2$ barycentric and Wasserstein barycentric over-sampling on the imbalanced practical setting. Sampling from $\ell_2$ is done by performing a kernel density estimation in the embedding space and then sampling from the resulting distribution. We report the Wilcoxon signed-rank test statistics in table \ref{tab:tab-l2_vs_wass} with Bonferroni correction to take into account the runs from all the datasets, settings, and iterations. Statistically significant (better) results are reported of the two sampling techniques against each other and a control of no over-sampling (augmentation). 
The results indicate that while both the over-sampling methods perform better than no sampling, reaffirming \textbf{RQ2}, sampling from the Wasserstein barycenter performs better than sampling from the $\ell_2$ barycenter confirming the claim in \ref{OT_wass} and asserting \textbf{RQ4}.
\begin{table}[ht!]
\small
    \begin{tabular}{p{2cm}|p{2cm}|p{2cm}}
     \toprule
      Significance wrt & $\ell_2$ & Wasserstein \\
     \midrule
     No over-sampling & $<10^{-20}$& $<10^{-26}$\\
     $\ell_2$ & --& $<10^{-12}$\\
     Wasserstein & --& -- \\
    \bottomrule
     
    \end{tabular}
\caption{Comparison of over-sampling (for augmentation) from the $\ell_2$ vs Wasserstein barycenters, indicating p-values if the method of the column is significantly better than that of the row. -- indicates statistically insignificant or worse performance}
\label{tab:tab-l2_vs_wass}
\end{table}

\subsection{Statistical Significance}
We study if the performance of our methods is statistically significant concerning the baselines for each dataset. We perform the Wilcoxon signed-rank test for significance with Bonferroni correction. We select the Wilcoxon test due to its nonparametric nature. The significant results in the form of heatmaps of the logarithms of the p-values are shown in figure \ref{fig:stat_sig}. The insignificant results have their values at 0. From the heatmaps, our method outperforms the baselines in all datasets, indicating the increase is indeed statistically significant. The results echo the observation made by \cite{ein-dor-etal-2020-active} that no single sampling strategy is better than all others. However, in the low data regime that we operate in, many of the methods are not significantly better than the random sampling baseline.

\subsection{Multi class Active Learning Results}
Similar to the binary settings, we also study the performance of our method in the multi-class setting.
We find that in the multi-class setting, too, our method works better than the baselines, as can be seen in figure \ref{fig:multilabel}. This shows that our method is not restricted to the binary classification setting but also to the more generic multi-class cases.

\subsection{Effect of augmentation factor}\label{ablation_aug_main}
We study the effect of the multiplicative factor while augmenting the samples using barycentric sampling technique. Here the number of samples of which to compute the barycenter is kept at two. The results are given in the appendix. It is observed that as the augmentation factor is increased, the performance increases initially when the data is low. However, as more data is acquired from the unlabeled pool, the gap reduces. This indicates that we may benefit more by keeping the augmentation factor high in the low data regime.



\subsection{Effect of number of samples to find the barycenter}
Similar to subsection \ref{ablation_aug_main}, we study the effect of the number of data points used to find the barycenter. Keeping the augmentation factor fixed at 20, we vary the number of samples to find the barycenter. Results are in the Appendix. It is observed that as the data points to the sample increases, the performance marginally drops. This becomes intuitive if we think of computing the barycenter as averaging over the samples. If we average out many samples, we effectively get the representative sample which would be similar in most iterations. 


\section{Conclusion} \label{sec:conclusion}
This paper presents and studies novel approaches of data sampling using concepts from submodular optimization and optimal transport theory for active learning in language models. We find that augmenting data using the Wasserstein barycenter helps to learn in the few sample setting. Further, we conclude that using a submodular function based on the Wasserstein distance for sampling in the gradient domain helps in active learning. Future works could explore data subset distances using optimal transport to find the subset of data that would benefit the model. It also remains to be explored if using core-sets obtained in this manner would help speed up the training of language models without affecting its accuracy by a large margin.  We point readers to the open questions in this domain as next viable steps.

\bibliography{emnlp2018}

\appendix
\section{Appendix}
\subsection{Details of Dataset}
We use seven standard text classification datasets and their ten variants as used by \cite{ein-dor-etal-2020-active}. 
Specifically, the datasets used are Wiki Attack \cite{wikiattack/Wulczyn/10.1145/3038912.3052591} which annotates wikipedia discussions for offensive content, ISEAR \cite{isear/shao/doi:10.1177/0022022114557479} which reports for personal accounts of emotions, TREC \cite{isear/shao/doi:10.1177/0022022114557479} which classifies question categories, CoLA \cite{cola/warstadt-etal-2019-neural} which identifies the content for linguistic acceptability, AG's News \cite{agnews/10.5555/2969239.2969312} which categorises news articles, Subjectivity \cite{subjectivity/10.3115/1218955.1218990} which classifies movie snippets into subjective and objective and Polarity \cite{polarity/10.3115/1219840.1219855} which provides sentiment categories of movie reviews. 
The datasets which contain labels with a prior of greater than 15\% are taken into the balanced setting and those with less than a 15\% prior are considered in the imbalanced setting as in \cite{ein-dor-etal-2020-active}. The experimental setup considers 3 settings: (1) \textbf{Balanced}, in which the prior probability of a class occurrence is $\geq$ 15\%. Here the initial seed for labeling is obtained by random sampling. (2) \textbf{Imbalanced} and (3) \textbf{Imbalanced practical} in which the prior probability of a  class occurrence is $\leq$ 15\%.
In the case of the Imbalanced setting the initial seed is taken by randomly sampling from the class with the low prior. Here the assumption is that there exists a heuristic to obtain an unbiased sample set with high precision of the low prior class. As this may not always hold true the Imbalanced practical setting samples using a simple and empirical heuristic such as a query based search for the samples belonging to the low prior class. This gives a (biased) set of samples of the class which are then used for labeling . For the class with a high prior probability random samples are drawn from the dataset and are labeled as such for both the imbalanced settings. 

\begin{table}[]
    \centering
    \begin{tabular}{c c c c c}
        \toprule 
        \textbf{No.}& \textbf{Dataset}& \textbf{Size}& \textbf{Class}& \textbf{Prior} \\
        \midrule 
        1& Subjectivity-imb& 5,556& subjective& 10\% \\
        2& Polarity-imb& 5,923& positive& 10\% \\
        3& AG’s News-imb& 17,538& world& 10\% \\
        4& Wiki attack& 21,000& general& 12\% \\
        5& ISEAR& 7,666& fear& 14\% \\
        6& TREC& 5,952& location& 15\% \\
        7& AG’s News& 21,000& world& 25\% \\
        8& CoLA& 9,594& unacceptable& 30\% \\
        9& Subjectivity& 10,000& subjective& 50\% \\
        10& Polarity& 10,662& positive& 50\% \\
        \bottomrule 
    \end{tabular}
    \caption{Dataset Statistics}
    \label{tab:data_table}
\end{table}

\subsection{Details of Comparative Methods}
The acquisition methods are used to query and obtain samples from the unlabeled pool for labeling. In the implementation 25 samples are queried per iteration. We use the active learning acquisition strategies as in \cite{ein-dor-etal-2020-active} 
as below:
\begin{enumerate}
    \item \textbf{Random}: The data for labeling are randomly sampled from the unlabeled pool.
    \item \textbf{Least Confidence} (\textbf{LC}, \cite{Lewis/DBLP:journals/corr/LewisG94}): This method picks the top $k$ samples for which the model uncertainty is the highest.
    \item \textbf{Monte Carlo Dropout} (\textbf{Dropout}, \cite{pmlr-v48-gal16}): This uses Monte Carlo dropout during inference for multiple runs and averages the probabilities followed by sampling the least certain instances.
    \item \textbf{Perceptron Ensemble} (\textbf{PE}, \cite{ein-dor-etal-2020-active}): Here the output of an ensemble of models is used to pick the instances with highest uncertainty. To avoid the computational cost associated with training an ensemble of BERT models, this method uses the perceptron models trained on the CLS output of the finetuned BERT.
    \item \textbf{Expected Gradient Length} (\textbf{EGL}, \cite{Huang/EGL/DBLP:journals/corr/HuangCRLSC16}): The samples are selected based on the largest expected gradient norm as in \cite{Huang/EGL/DBLP:journals/corr/HuangCRLSC16}. The expectation is over the model predicted probabilities.
    \item \textbf{Core-Set} (\cite{sener2018active}): This method picks samples that best cover the dataset in the embedding space (CLS) using the greedy method desribed in \cite{sener2018active}.
    \item \textbf{Discriminative Active Learning} (\textbf{DAL}, \cite{Gissin/abs-1907-06347}): This technique selects samples that make the L most representative instances of the entire pool as per \cite{Gissin/abs-1907-06347}.
\end{enumerate}



\subsection{Additional Results} \label{sec:evaluation}
\subsubsection{Active Learning Results on the Binary class settings}
The results for the three binary settings are shown in the graphs in figures \ref{fig:bal}, \ref{fig:imb} and \ref{fig:imb_prac}. From the figures we observe that in most of the datasets our method outperforms all the other methods in all settings. 
We report the $f_1$ scores for all settings, since in the balanced case also there may be a slight class imbalance (upto $60\%$). For the balanced setting we find that our method performs exceptionally well in the start with lesser data and then as the training data increases with iterations the performance of the other methods catch up. Thus we could say that our method converges faster in scenarios where the data is balanced. There was one exception with the Cola dataset in which the metric drops as compared to others. Upon further investigating we find that the upsampling causes a drop in performance in this case. Thus, while sampling in this manner may cause an increase in performance in most of the cases it may require the practitioner to fine tune the factor by which to augment the data. In the other 2 settings, namely the imbalanced settings, we observe a clear gain in performance. Thus we conclude that the combination of our submodular query function and barycentric sampling benefits performance in active learning scenarios where there is prevalence of class imbalance. 

\begin{figure*}[htbp]
  \centering
\setkeys{Gin}{width=0.24\linewidth,height=0.24\linewidth}
  \includegraphics{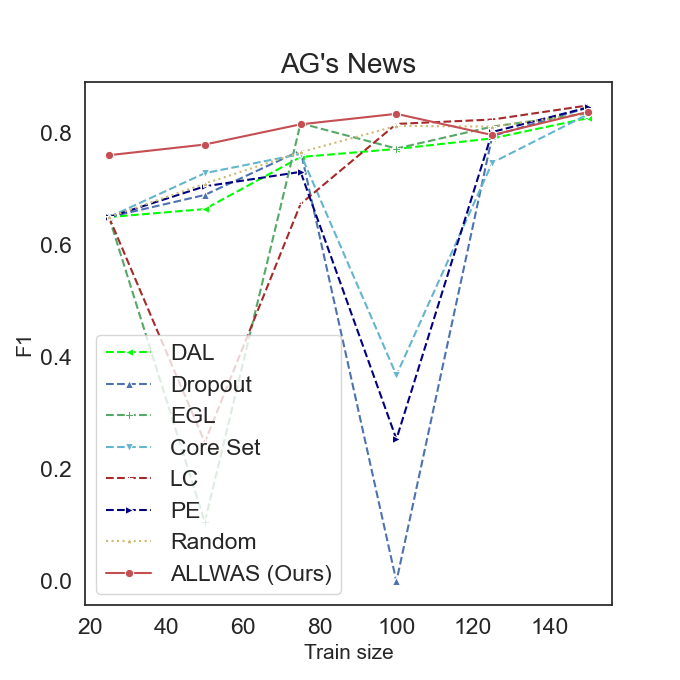}\,%
  \includegraphics{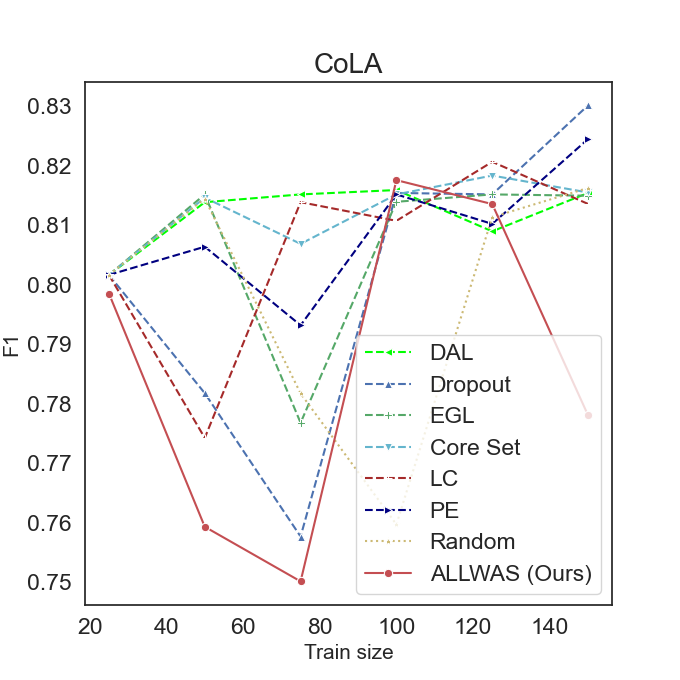}\,%
  \includegraphics{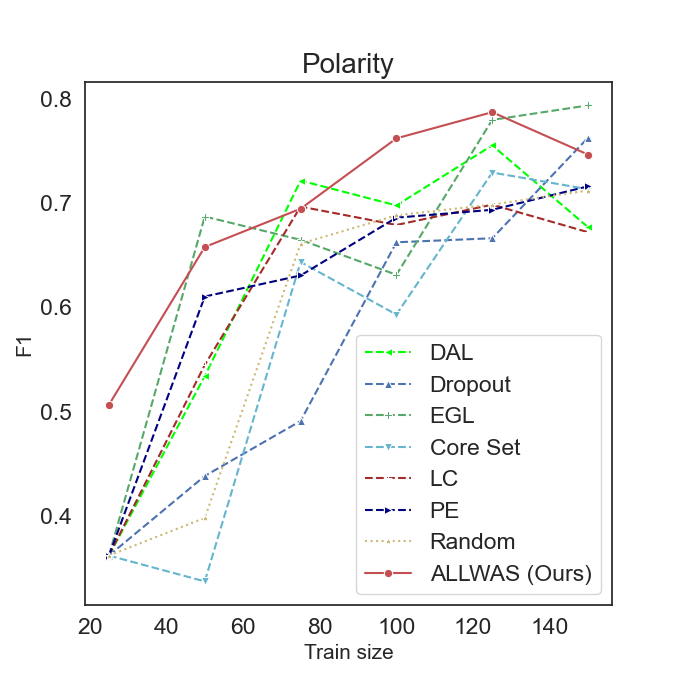}\,%
  \includegraphics{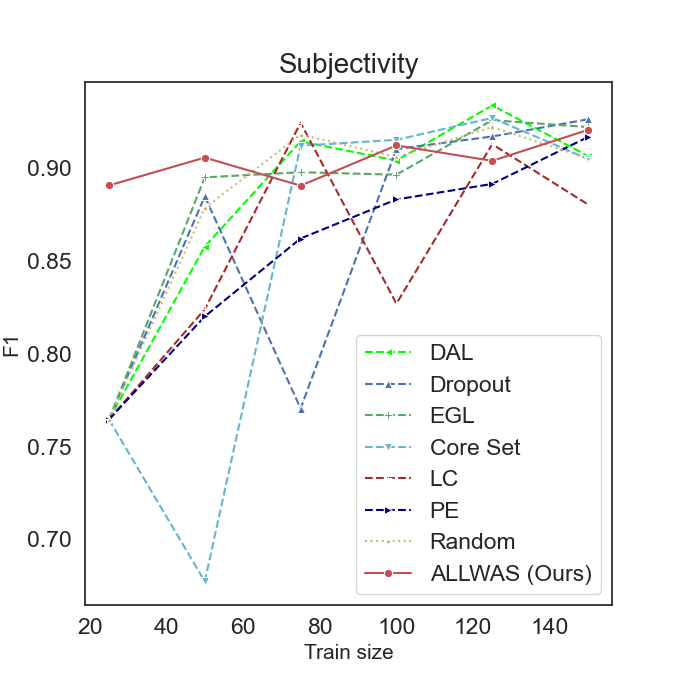}
 \caption{Results on the Balanced setting}
  \label{fig:bal}
\end{figure*}

\begin{figure*}[htbp]
  \centering
\setkeys{Gin}{width=0.32\linewidth,height=0.32\linewidth}
  \includegraphics{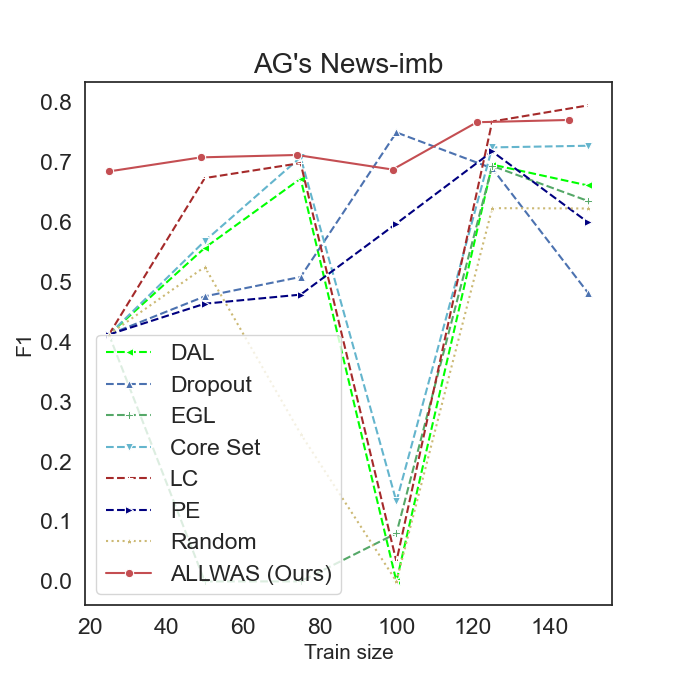}\,%
  \includegraphics{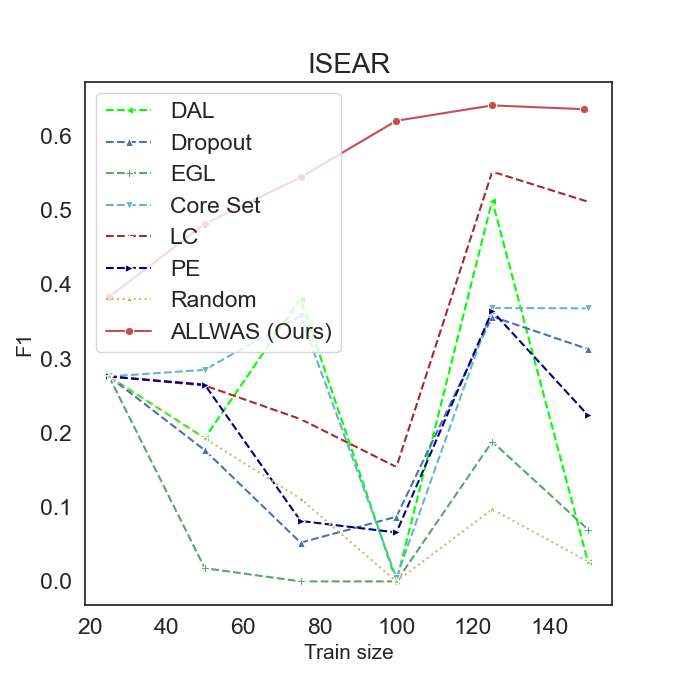}\,%
  \includegraphics{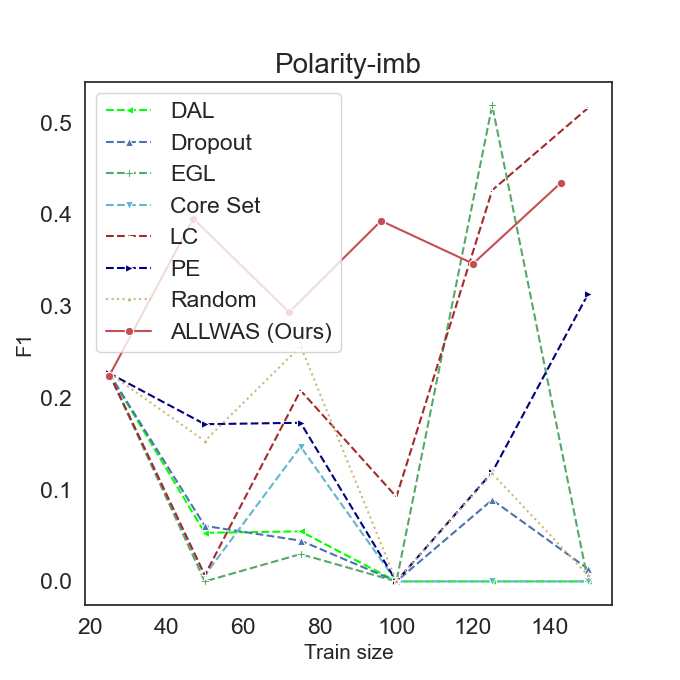}

  \includegraphics{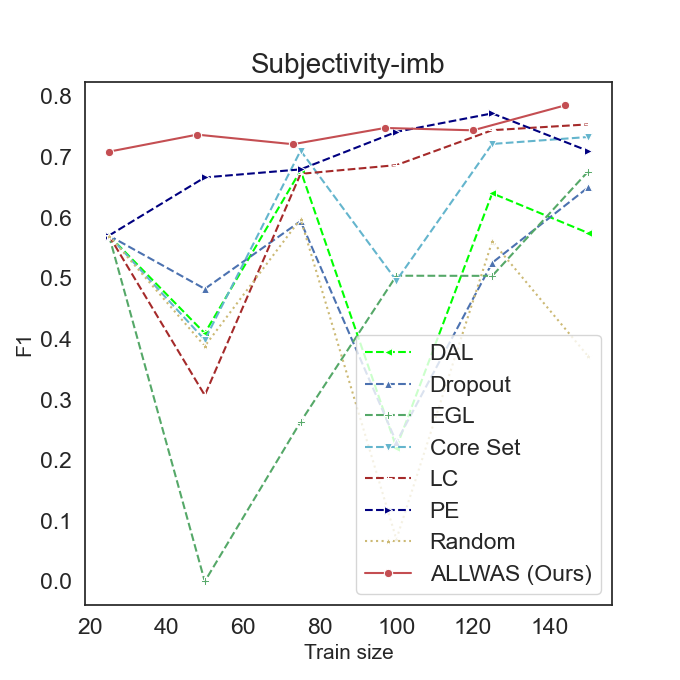}\,%
  \includegraphics{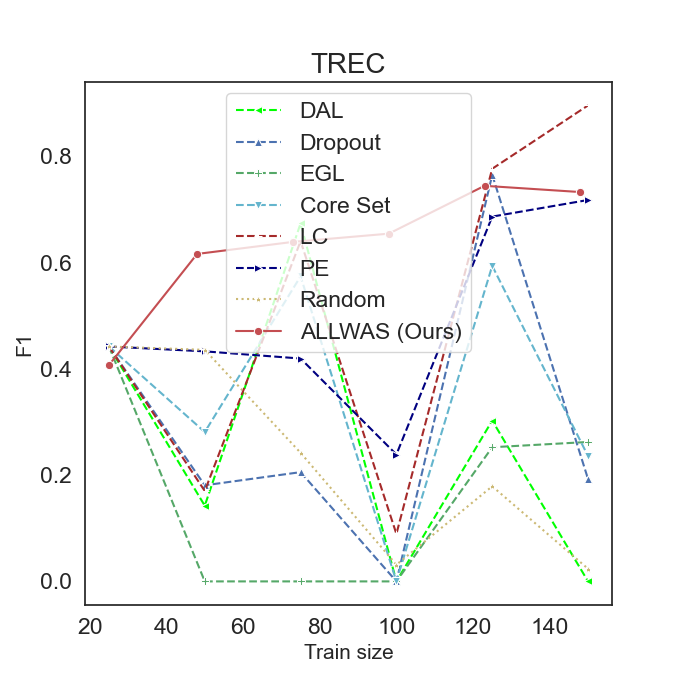}\,%
  \includegraphics{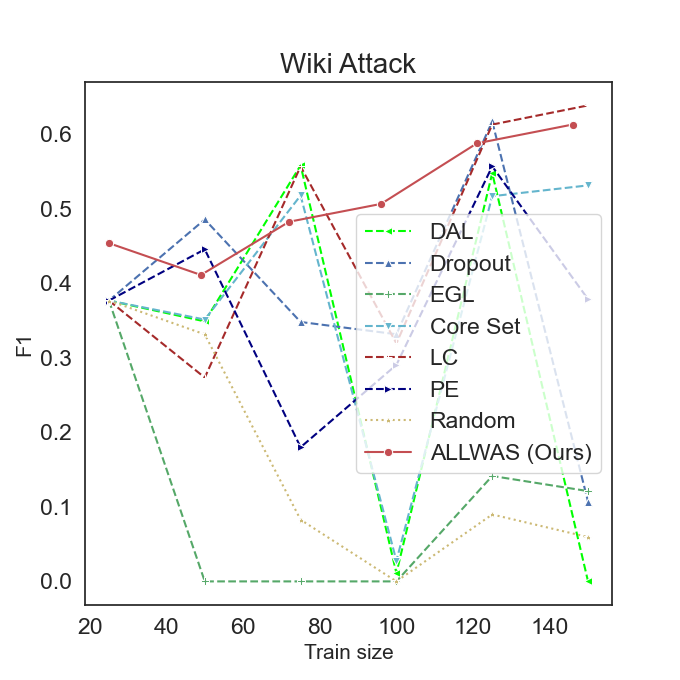}
 \caption{Results on the Imbalanced setting}
  \label{fig:imb}
\end{figure*}

\begin{figure*}[htbp]
  \centering
\setkeys{Gin}{width=0.24\linewidth,height=0.24\linewidth}
  \includegraphics{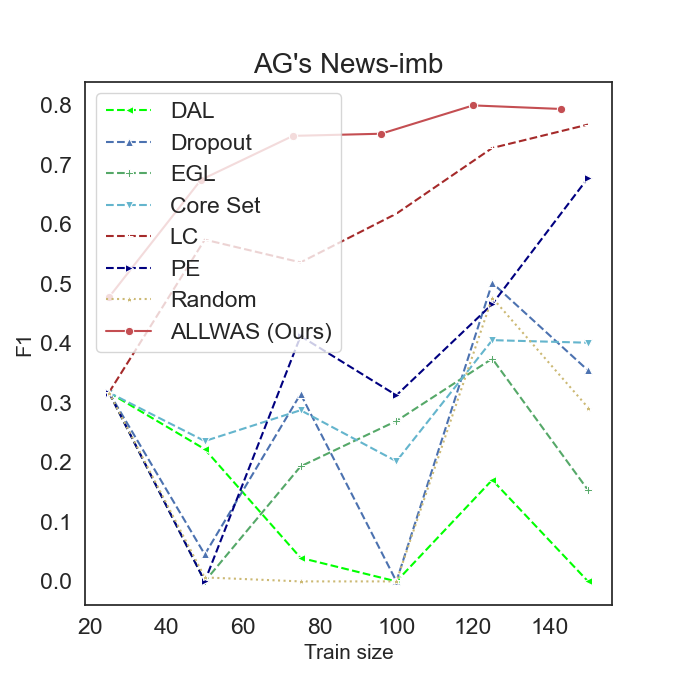}\,%
  \includegraphics{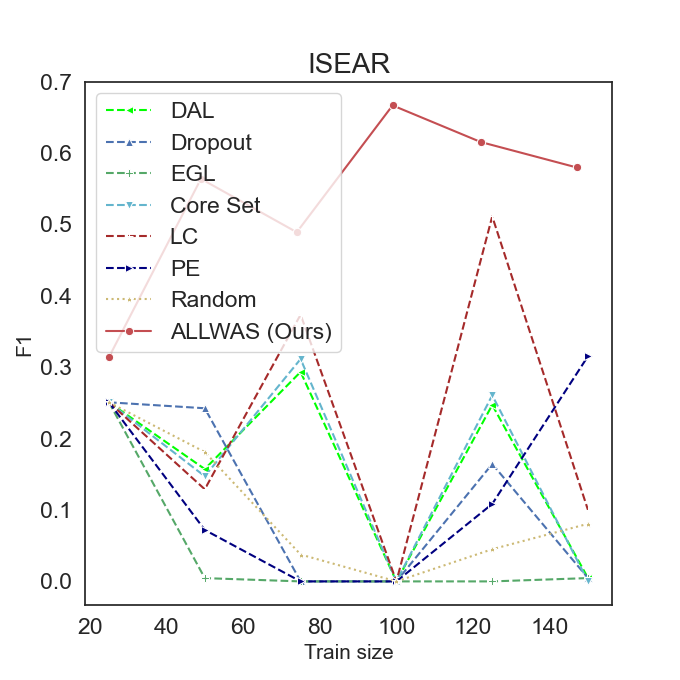}\,%
  \includegraphics{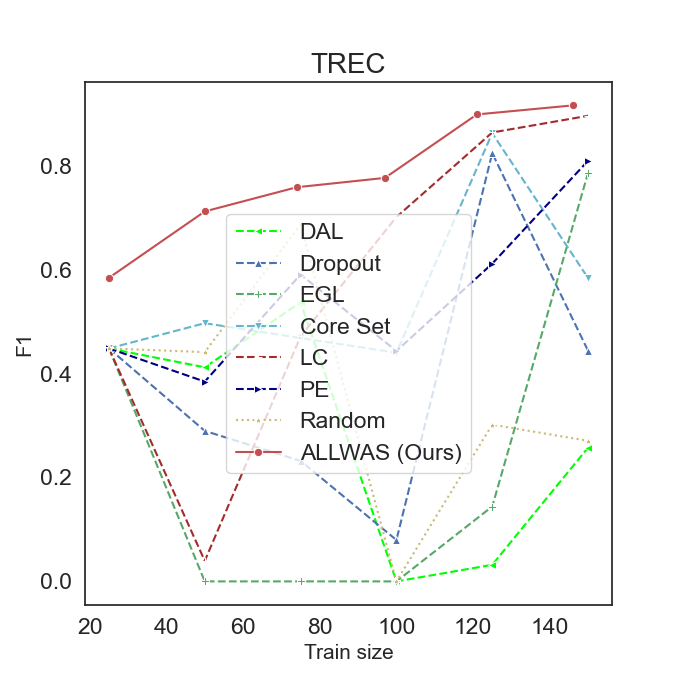}\,%
  \includegraphics{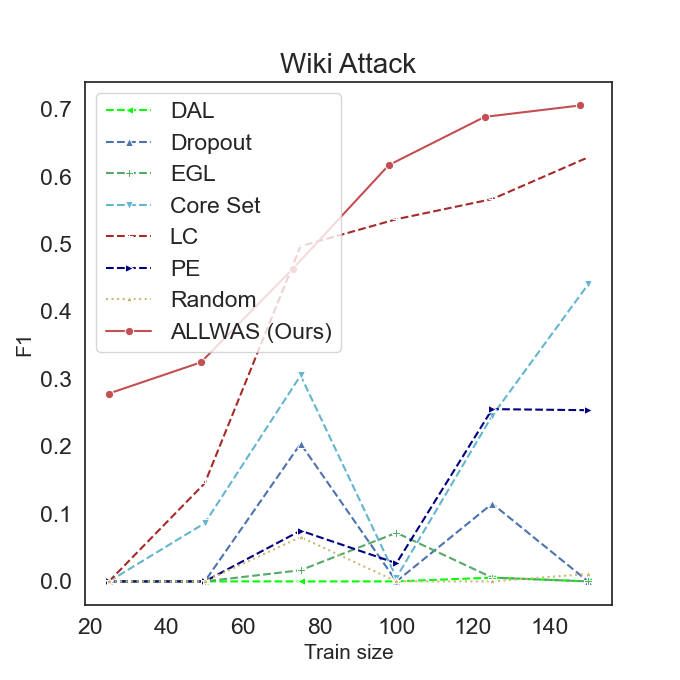}
 \caption{Results on the Imbalanced Practical setting}
  \label{fig:imb_prac}
\end{figure*}

\begin{figure*}[htbp]
  \centering
\setkeys{Gin}{width=0.24\linewidth,height=0.24\linewidth}
  \includegraphics{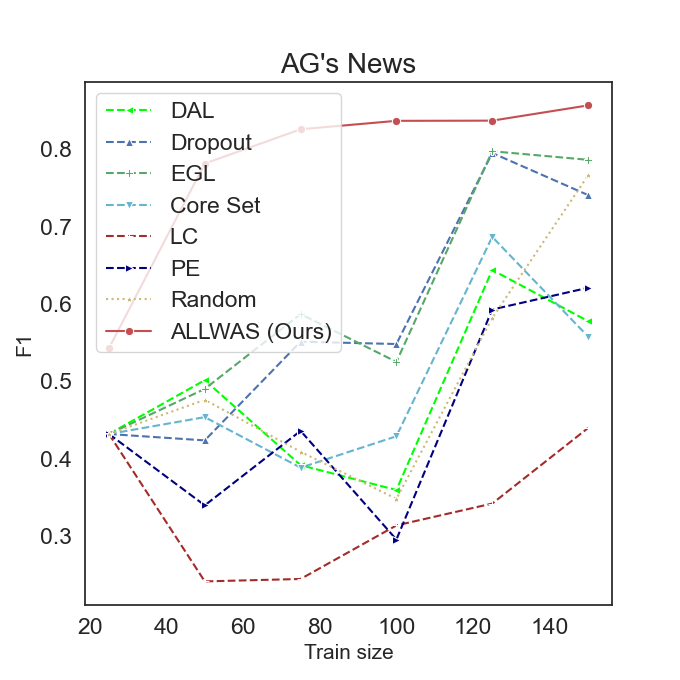}\,%
  \includegraphics{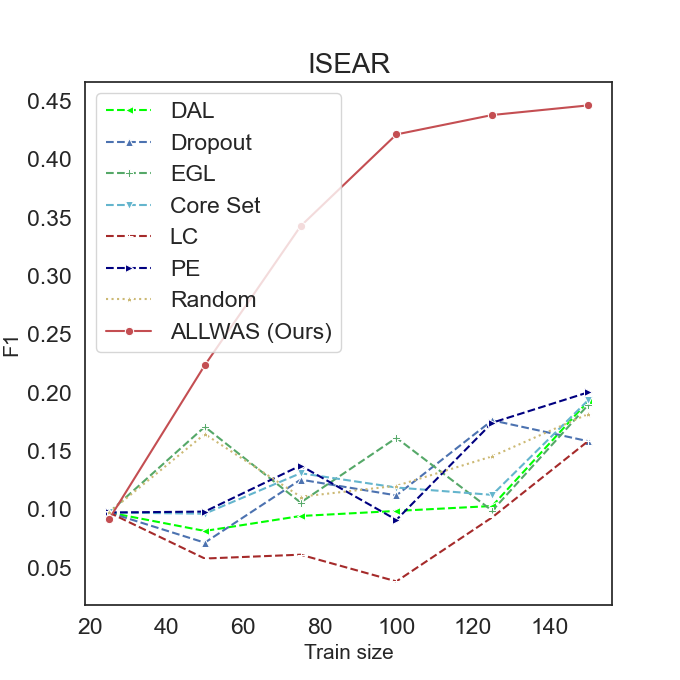}\,%
  \includegraphics{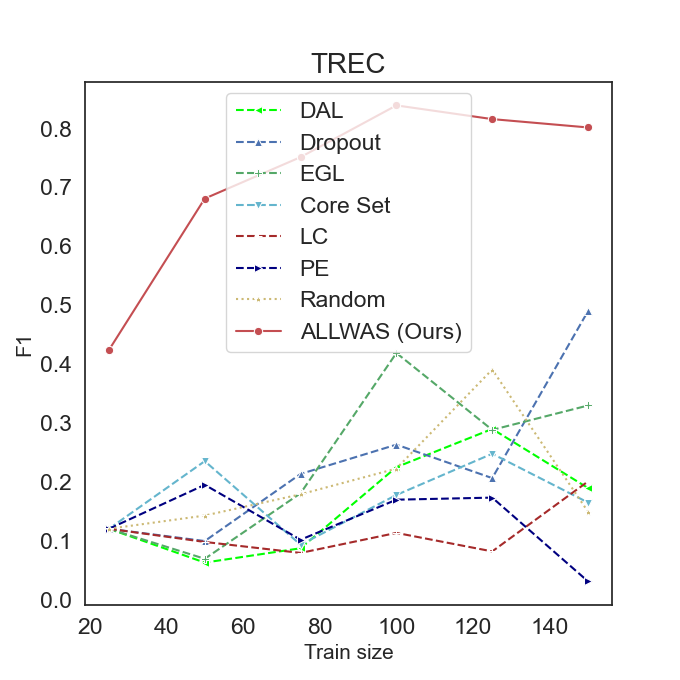}\,%
  \includegraphics{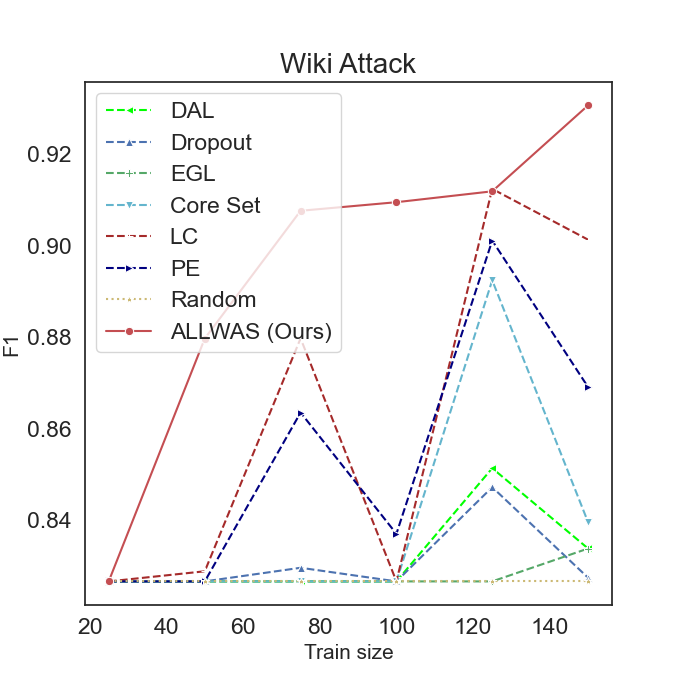}
 \caption{Results in the Multi-class setting}
  \label{fig:multilabel}
\end{figure*}

\subsubsection{Multi class Active Learning Results}
In addition to the results in the main paper, here we report the results on all methods for the multi-class setting in figure \ref{fig:multilabel}.

\subsubsection{Effect of augmentation factor}\label{ablation_aug}
We would like to study the effect of the multiplicative factor while augmenting the samples using barycentric sampling technique. Here the number of samples of which to compute the barycenter are kept at 2. The results are shown in figure \ref{fig:ablation_aug}. It can be seen that as the augmentation factor is increased the performance increases initially, when the data is low, but as more data is acquired from the unlabeled pool the gap reduces and also reverses. This indicates that we may benefit more by keeping the augmentation factor high in the low data regime.
\begin{figure*}[htbp]
  \centering
\setkeys{Gin}{width=0.24\linewidth,height=0.24\linewidth}
  \includegraphics{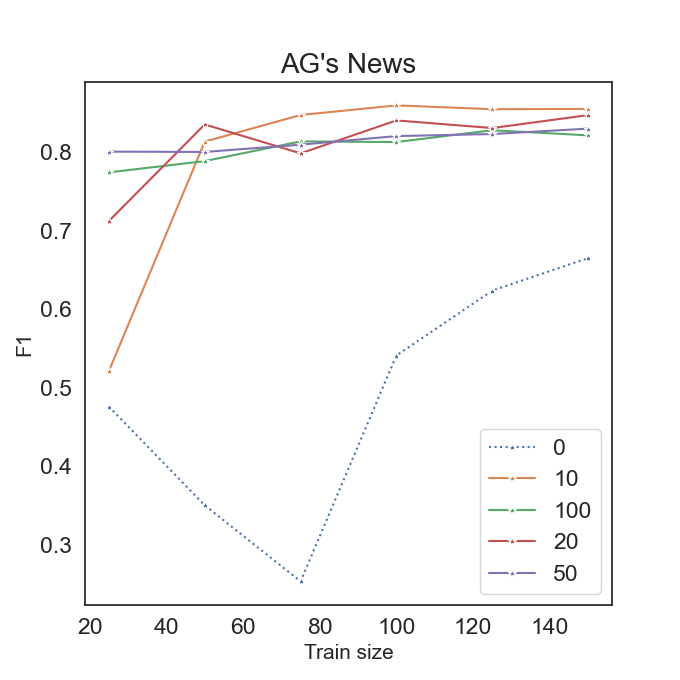}\,%
  \includegraphics{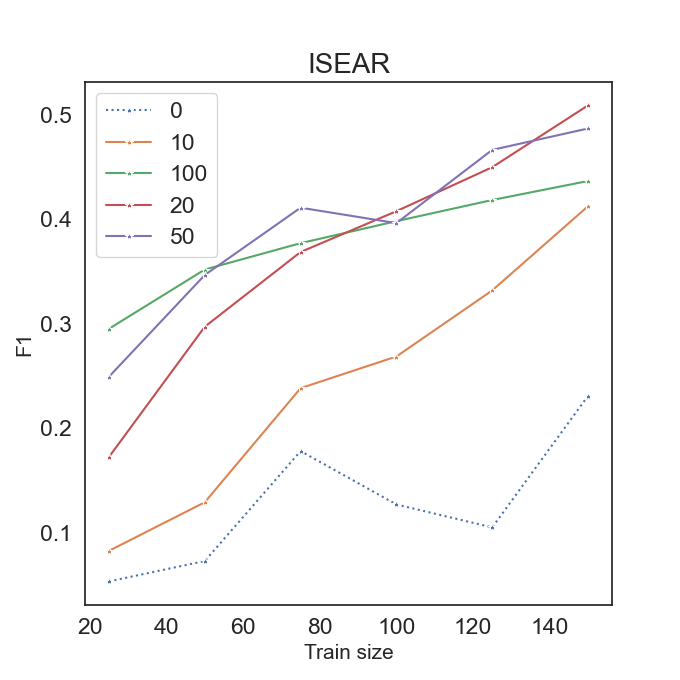}\,%
  \includegraphics{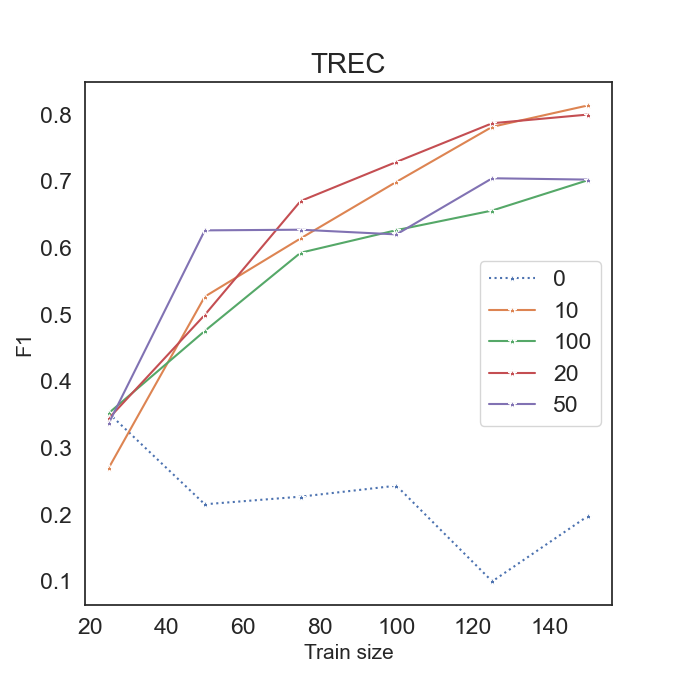}\,%
  \includegraphics{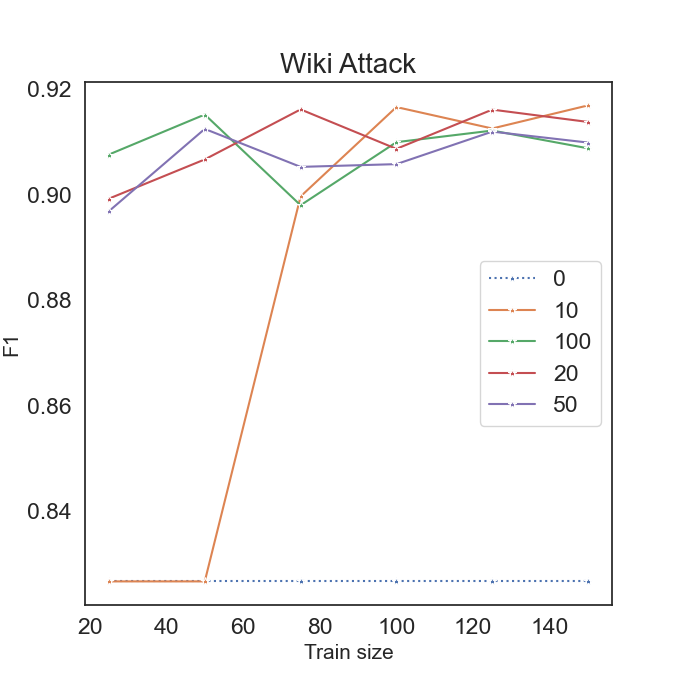}
 \caption{Ablation study on the augmentation factor}
  \label{fig:ablation_aug}
\end{figure*}

\begin{figure*}[htbp]
  \centering
\setkeys{Gin}{width=0.24\linewidth,height=0.24\linewidth}
  \includegraphics{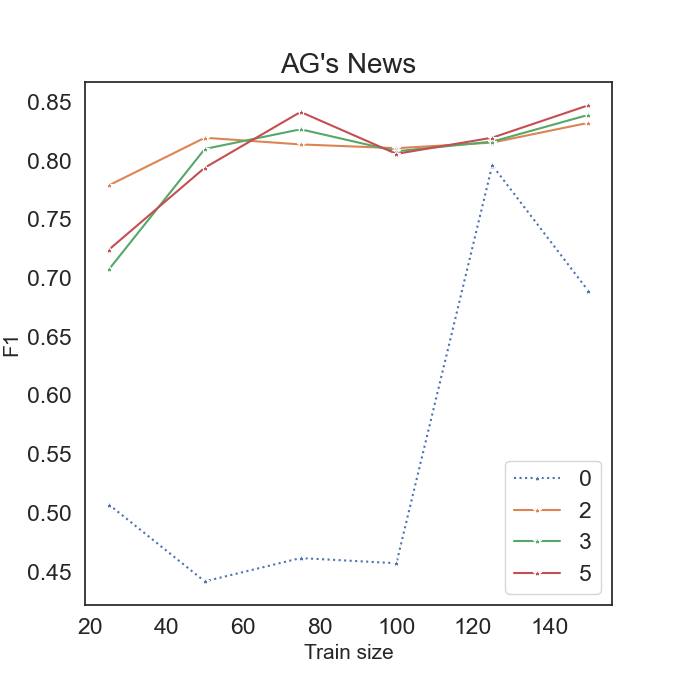}\,%
  \includegraphics{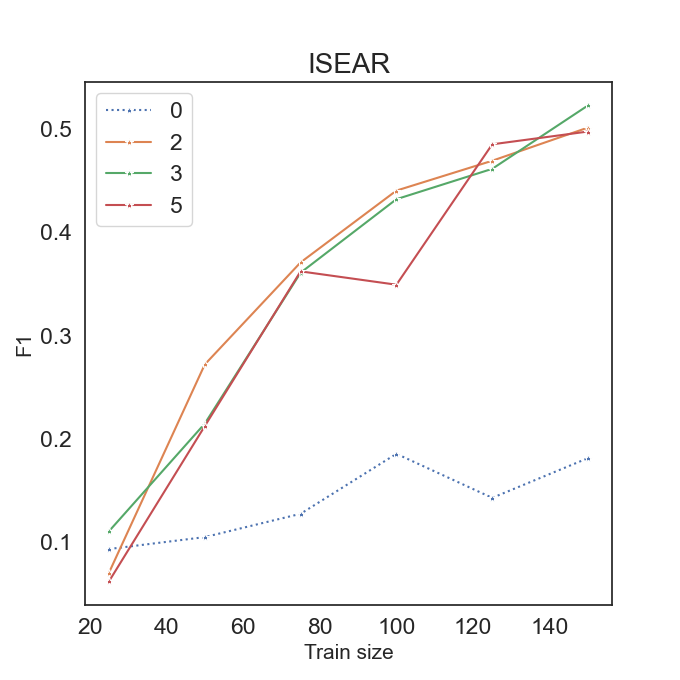}\,%
  \includegraphics{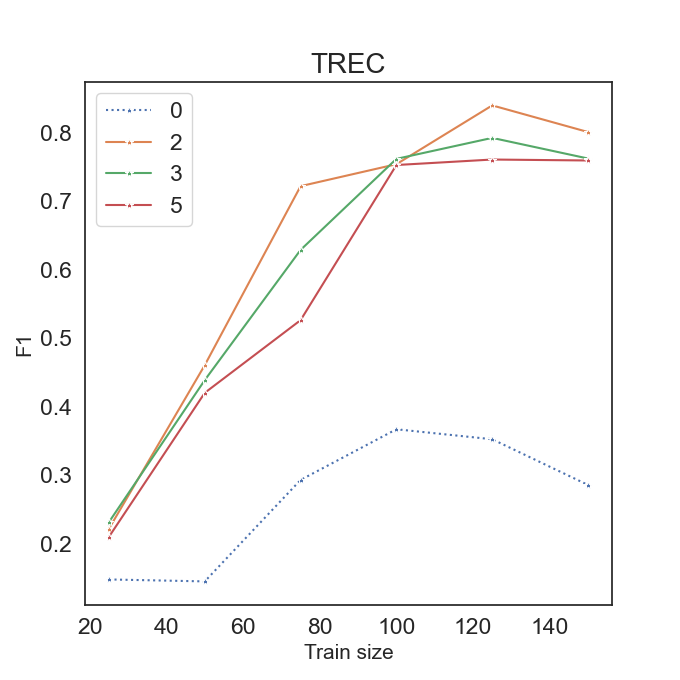}\,%
  \includegraphics{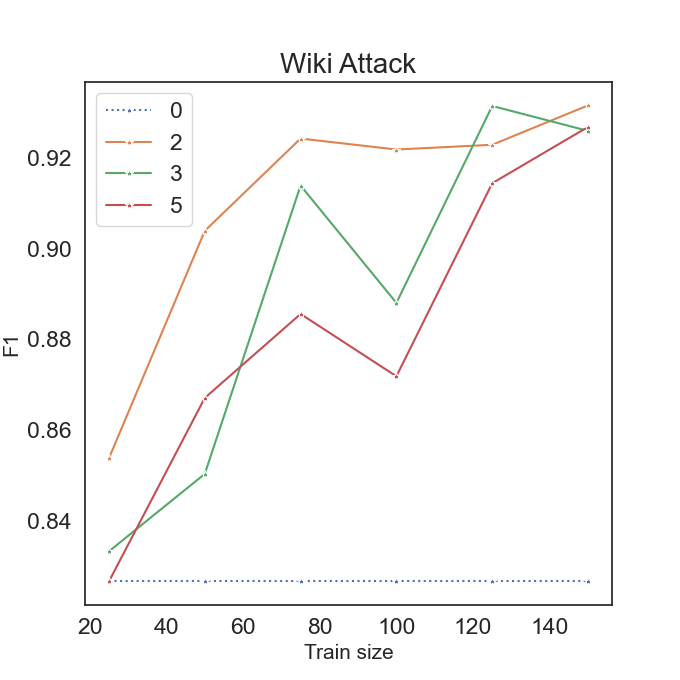}
 \caption{Ablation study on the number of samples to compute the barycenter}
  \label{fig:ablation_bary}
\end{figure*}

\begin{figure}[htbp]
\setkeys{Gin}{width=1.0\linewidth}
  \includegraphics{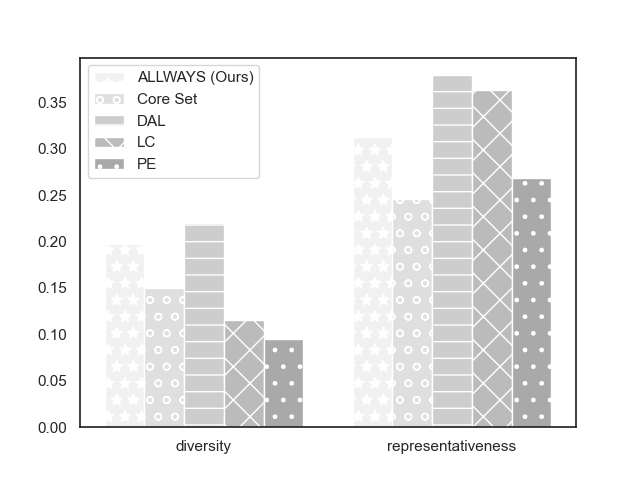}
  \caption{Diversity and Representativeness}
  \label{fig:div_rep}
\end{figure}

\subsubsection{Effect of number of samples to find the barycenter}
Similar to section \ref{ablation_aug}, we would like to study the effect of the number of data points used to find the barycenter. Keeping the augmentation factor fixed at 20, we vary the number of samples to find the barycenter. As can be seen in figure \ref{fig:ablation_bary}, it can be understood that as the data points to sample from increases the performance marginally drops. This becomes intuitive if we think of computing the barycenter as averaging over the samples and if we average out many samples we effectively get the representative sample which would similar in most iterations especially in the labels space. 

\subsubsection{Diversity and Representativeness}
We compute the diversity and representativeness of the selected samples as outlined in \cite{ein-dor-etal-2020-active}. From figure \ref{fig:div_rep} we see that our method gives comparable values of these metrics. DAL performs well on both the metrics as it was designed for maximising them. This shows there is some room for improvement in the proposed method with regards to the diversity and representativeness metrics. We leave this for future works.

\end{document}